\title{Tackling Data Heterogeneity in Federated Learning with Class Prototypes}
\author{
    Yutong Dai,\textsuperscript{\rm 1}
    Zeyuan Chen,\textsuperscript{\rm 2}
    Junnan Li,\textsuperscript{\rm 2}
    Shelby Heinecke,\textsuperscript{\rm 2}
    Lichao Sun,\textsuperscript{\rm 1}
    Ran Xu\textsuperscript{\rm 2}
}
\newtheorem{theorem}{Theorem}
\newtheorem{lemma}{Lemma}
\theoremstyle{definition}
\newtheorem{assumption}{Assumption}
\newcommand{\fedavg}{{\text{FedAvg}}}
\newcommand{\fedavguh}{{\text{FedAvg+UH}}}
\newcommand{\grad}{\nabla}
\newcommand{\norm}[1]{\left\|{#1}\right\|}
\newcommand{\R}[1]{\mathbb{R}^{#1}}
\newcommand{\Embb}{\mathbb{E}}
\newcommand{\Ncal}{{\cal N}}
\newcommand{\Ccal}{{\cal C}}
\newcommand{\Scal}{{\cal S}}
\newcommand{\Dcal}{{\cal D}}
\newcommand{\Ocal}{{\cal O}}
\newcommand{\Bcal}{{\cal B}}
\newcommand{\Pmbb}{\mathbb{P}}
\begin{document}

\maketitle

\begin{abstract}
 
\end{abstract}

\section{Introduction}\label{sec:introduction}

Federated learning (FL)~\cite{mcmahan2017communication} is an emerging area that attracts significant interest in the machine learning community due to its capability to allow collaborative learning from decentralized data with privacy protection. However, in FL, clients may have different data distributions, which violates the standard independent and identically distribution (i.i.d) assumption in centralized machine learning. The non-i.i.d phenomenon is known as the data heterogeneity issue and is an acknowledged cause of the performance degradation of the global model~\cite{tan2022towards}. Moreover, from the client's perspective, the global model may not be the best for their tasks. Therefore, personalized federated learning (PFL) emerged as a variant of FL, where personalized models are learned from a combination of the global model and local data to best suit client tasks.

While PFL methods address data heterogeneity, class imbalance combined with data heterogeneity remains overlooked. Class imbalance occurs when clients' data consists of different class distributions and the client may not possess samples of a particular class at all. Ideally, the personalized model can perform equally well in all classes that appeared in the local training dataset. For example,   medical institutions have different distributions of medical records across diseases \cite{ng2021federated}, and it is crucial that the personalized model can detect local diseases with equal precision. Meanwhile, the currently adopted practice of evaluating the effectiveness of PFL methods can also be biased. Specifically, when evaluating the accuracy, a single balanced testing dataset is split into multiple local testing datasets that match clients' training data distributions. Then each personalized model is tested on the local testing dataset, and the averaged accuracy is reported. However, in the presence of class imbalance, such an evaluation protocol will likely give a biased assessment due to the potential overfitting of the dominant classes. It is tempting to borrow techniques developed for centralized class imbalance learning, like re-sampling or re-weighting the minority classes. However, due to the data heterogeneity in the FL setting, different clients might have different dominant classes and even have different missing classes; hence the direct adoption may not be applicable. Furthermore, re-sampling would require the knowledge of all classes, potentially violating the privacy constraints.

Recent works in class imbalanced learning in non-FL settings \cite{kang2019decoupling, zhou2020bbn} suggest decoupling the training procedure into the representation learning and classification phases. The representation learning phase aims to build high-quality representations for classification, while the classification phase seeks to balance the decision boundaries among dominant classes and minority classes. Interestingly, FL works such as \cite{oh2021fedbabu,chen2021bridging} find that the classifier is the cause of performance drop and suggest that learning strong shared representations can boost performance. 

Consistent with the findings in prior works, as later shown in Figure \ref{fig:toy_example}, we observe that representations for different classes are uniformly distributed over the representation space and cluster around the class prototype when learned with class-balanced datasets. However, when the training set is class-imbalanced, as is the case for different clients, representations of minority classes overlap with those of majority classes; hence, the representations are of low quality. Motivated by 
these observations, we propose FedNH (\underline{n}on-parametric \underline{h}ead), a novel method that imposes uniformity of the representation space and preserves class semantics to address data heterogeneity with imbalanced classes. We initially distribute class prototypes uniformly in the latent space as an inductive bias to improve the quality of learned representations and smoothly infuse the class semantics into class prototypes to improve the performance of classifiers on local tasks. Our contributions are summarized as follows.

\begin{itemize}
    \item We propose FedNH, a novel method that tackles data heterogeneity with class imbalance by utilizing uniformity and semantics of class prototypes. 
    \item We design a new metric to evaluate personalized model performance. This metric is less sensitive to class imbalance and reflects personalized model generalization ability on minority classes.
    \item Numerical experiments on Cifar10, Cifar100, and TinyImageNet show that FedNH can effectively improve both personalized and global model classification accuracy. The results are on par or better than the state-of-the-art, with significantly fewer computation costs (refer to Appendix~\ref{appendix:experiments} for discussions on computation costs).
\end{itemize}

We close this section by introducing the notation and terminology used throughout the paper.

\noindent\textbf{Notation and Terminology.}
Let $\R{n}$ and $\R{m\times n}$ denote the set of $n$ dimensional real vectors and the set of $m$-by-$n$ dimensional real matrices, respectively. Unless stated explicitly, $\norm{\cdot}$ denotes the $\ell_2$ norm and $|\cdot|$ denotes the set cardinality operator. Let $[N]$ denote the set $\{1,2,\cdots, N\}$ for any positive integer $N$. For any matrix $A\in\R{m\times n}$, $A_i$ denotes the $i$th row of $A$. 
Let $\Ncal_d(\mu,\sigma)$ denote the d-dimensional normal distribution with mean $\mu$ and variance $\sigma^2$. $X\sim \Ncal_d(\mu,\sigma)$ represents a random sample from the distribution. $\Embb[\cdot]$ is the expectation operator and $\lceil\cdot\rceil$ is the round-up operator.

For a neural network, we decompose its parameters into the body ($\theta$) and the head ($W$).
The body is used to learn the abstract representation of inputs, and the head is used for classification. The output of a neural network can be written as $Wf(\theta;\cdot)$, where the second argument is a placeholder for an input. In this work, we solely consider the last linear layer as the head. We use the terms \textit{head} and \textit{prototype} interchangeably as they both refer to parameters of the last linear classification layer of the neural network. The term head is often used in discussing the neural network's architecture, while the term prototype appears more often when discussing the classification task. Some works define prototype as the averaged representations of a class, and such a distinction will be made clear in the context.

\begin{figure*}[!t]
     \centering
     \resizebox{1.0\textwidth}{!}{
     \begin{subfigure}[b]{0.24\textwidth}
         \centering
         \includegraphics[height=3.5cm]{./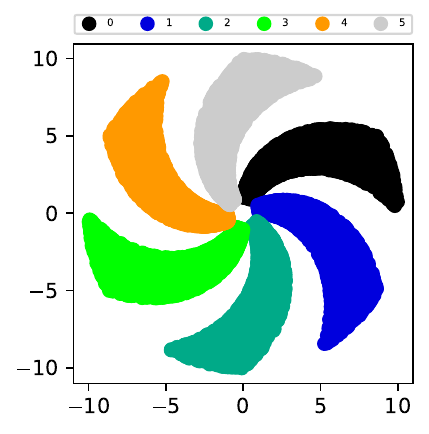}
         \caption{Spiral Dataset}
         \label{fig:balanced-trainset}
     \end{subfigure}
    \hfill     
     \begin{subfigure}[b]{0.24\textwidth}
         \centering
         \includegraphics[height=3.5cm]{./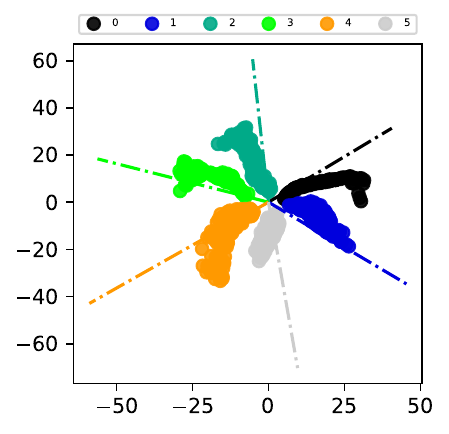}
         \caption{Balanced + FH}
         \label{fig:b_fh}
     \end{subfigure}
    \hfill
    \begin{subfigure}[b]{0.24\textwidth}
         \centering
         \includegraphics[height=3.5cm]{./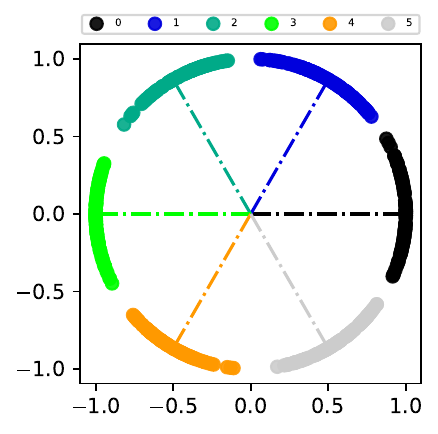}
         \caption{Balanced + UH}
         \label{fig:b_uh}
    \end{subfigure}  
     \hfill
     \begin{subfigure}[b]{0.24\textwidth}
         \centering
         \includegraphics[height=3.5cm]{./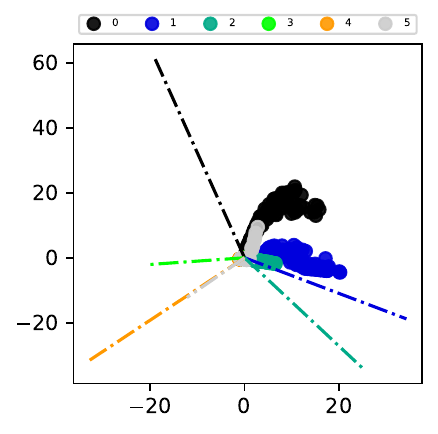}
         \caption{Imbalanced + FH}
         \label{fig:ub_fh}
     \end{subfigure}
    \hfill
    \begin{subfigure}[b]{0.24\textwidth}
         \centering
         \includegraphics[height=3.5cm]{./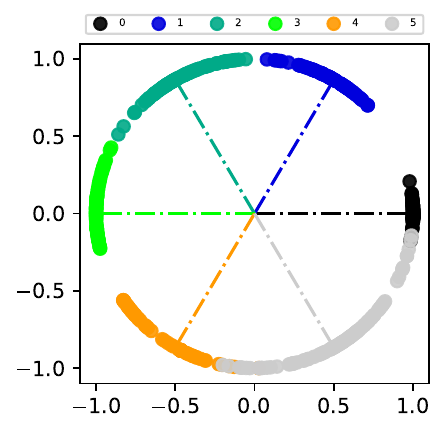}
         \caption{Imbalanced + UH}
         \label{fig:ub_uh}
    \end{subfigure}
    }
    \caption{Visualization of the spiral dataset and the learned representations under different methods. (a): Balanced training dataset. (b): Representations (dots) and class prototypes (dashed lines) are learnable under the balanced dataset with the free head (FH), i.e, the classification head is also learnable. (c): Representations and class prototypes learned under the balanced dataset with the uniform head (UH). (d) and (e) are parallel to (c) and (e), respectively, where the training dataset is imbalanced.}
    \label{fig:toy_example}
\end{figure*}

\section{Related Work}
\subsection{Personalized Federated Learning}
PFL methods can be roughly classified based on the strategies to generate personalized models, e.g., parameter decoupling, regularization, and model interpolation. For a detailed discussion, we refer readers to the survey~\cite{tan2022towards}. Here, we mainly focus on methods that decouple the body and head when clients perform local training.
FedPer~\cite{arivazhagan2019federated} learns the body and head concurrently as in FedAvg and only
shares the body with the server. Therefore, a personalized model consists of a shared body and a personalized head. FedRep~\cite{collins2021exploiting} learns the head and body sequentially and only shares the body. Specifically, each client first learns the head with a fixed body received from the server and then learns the body with the latest personalized head fixed.
FedBABU~\cite{oh2021fedbabu} learns only the body with the randomly initialized and \textit{fixed} head during local updates and shares only the body to the server. Personalized models are obtained by fine-tuning the global model when training is finished. FedROD~\cite{chen2021bridging} designs a two-head-one-body architecture, where the two heads consist of a generalized head trained with class-balanced loss while the personalized head is trained with empirical loss. The body and generalized head are shared with the server for aggregation, and the personalized head is kept privately. The methods mentioned above assume that the model architectures are the same across clients.
FedProto~\cite{tan2022fedproto} lifts such a restriction. It only shares the class prototypes (calculated as the averaged representations of each class) so that different clients can have different model architectures as bodies.
\subsection{Class Imbalanced Learning}
In the non-FL setting,  data-level and algorithm-level approaches are the most common ways to address the class imbalance. Over-sampling minority classes or under-sampling majority classes are simple yet effective ways to create more balanced class distributions~\cite{kubat1997addressing, chawla2002smote,  he2009learning}. However, they may be vulnerable to overfitting minority classes or information loss on majority classes. On the algorithm level, various class-balanced losses are proposed to assign different losses either sample-wise or class-wise~\cite{lin2017focal,khan2017cost,cao2019learning,cui2019class}. Recent works \cite{kang2019decoupling, zhou2020bbn} suggest decoupling the training procedure into the representation learning and classification phases, where strong representations are learned during the first phase while the classifier is re-balanced during the second phase. \cite{wang2020long} proposes a multi-expert framework, where each expert is only responsible for a subset of classes to minimize the bias on minority classes.

In the FL setting, a few works address class imbalance. \cite{duan2020self}~selects clients with complimentary class distributions to perform updates, which requires clients to reveal the class distribution to server. \cite{wang2021addressing} assumes an auxiliary dataset with balanced classes are available on the server to infer the composition of training data and designs a ratio loss to mitigate the impact of imbalance. CReFF~\cite{shang2022federated} extends the idea from~\cite{kang2019decoupling} to re-train a classifier based on federated features in a privacy preserving manner. To our best knowledge, FedROD and CReFF are the only works that address the data heterogeneity with class imbalance in the FL setting without potential privacy leakage and without requiring auxiliary datasets on the server side.

\section{Methodology}

\subsection{A Motivating Example}\label{sec:motivating.example}
In PFL, a crucial part of training is done locally in each client. We motivate our method by considering the behavior of learned representations of inputs and class prototypes learned by a neural network with cross-entropy loss under both balanced and imbalanced training datasets. We first generate a balanced two-dimensional synthetic spiral training dataset (Figure~\ref{fig:balanced-trainset}) with six classes, each with 3000 points. For each class $k\in\{0,\cdots,5\}$, the data points are generated as 
$\Ccal_k=\{(x_{k,i}, y_{k,i})~|~ x_{k,i}=r_i\sin \omega_{k,i}, y_{k,i}=r_i\cos \omega_{k,i}, i\in[3000]\}$, where for all $i\in[3000]$,
$$
 r_i = 1+(i-1)\frac{9}{2999} \text{ and } \omega_i=\frac{k}{3}\pi + (i-1)\frac{k}{3\times 2999}\pi + b_i,
$$
and $b_i\sim \Ncal_1(0,1)$.
A balanced testing dataset is created similarly (Figure~\ref{fig:appendix-spiral-testing} in Appendix~\ref{appendix:experiments}). We consider a 5-layer fully connected neural network with ReLu activation~\cite{agarap2018deep} as the classifier. For visualization purposes, the dimension of the latent representation is set to two. We draw latent representations (points) and class prototypes (dashed lines)\footnote{We recall that the weight matrix of the last linear layer represents the prototypes, and each row of the matrix is also a vector of the same dimension as the latent representation.} over the testing dataset in Figure~\ref{fig:b_fh}. Two key observations include that 1) the norm of class prototypes are of about the same magnitude and 2) class prototypes distribute relatively uniformly over the latent space. Next, we generate an imbalanced training dataset, where the number of samples for each class is selected sequentially from $\{3000, 1500, 750, 375, 187, 93\}$. Then as shown in Figure~\ref{fig:ub_fh}, the learned class prototypes are not of the same magnitude, the minority classes' prototypes are pulled towards the majority classes' prototypes, and the representations of minority classes collapse into the space that the majority ones occupy. This phenomenon matches with the previous study \cite{kang2019decoupling} and motivates us to distribute class prototypes uniformly over a unit hypersphere and fix prototypes throughout the training phase. As shown in Figure~\ref{fig:b_uh}, enforcing class prototypes' uniformity will not hurt the training when the training dataset is balanced. And when the training set is imbalanced, from Figure~\ref{fig:ub_uh}, one can see the uniformity of class prototypes is an effective inductive bias to combat the negative impact of the dominating classes over the minority classes. We end this section by making the following remarks.
\begin{itemize}
    \item Enforcing uniformity over the class prototypes can be regarded as a method of reweighing the training samples.
    For the samples that cannot be easily and correctly classified, they make more contributions to the cross-entropy loss. This uniformity strategy differs from methods such as Focal loss~\cite{lin2017focal}, that add explicit weights per sample. Meanwhile, this strategy does not require tuning any parameters. Moreover, it brings additional benefits to the minority classes, whose representations are no longer overlapped with the majority classes.

    \item Fixing class prototypes sets a consistent learning goal for clients~\cite{oh2021fedbabu} and imposing uniformity over class prototypes helps to combat minority representation collapse in the FL setting. We empirically validate this claim by distributing the spiral training set to 100 clients and perform learning with FedAvg. More details for the experiment can be found in Appendix~\ref{appendix:experiments}. Figure~\ref{fig:fedavgc0_spiral} and Figure~\ref{fig:fedavgc1_spiral} visualize class prototypes and latent feature representations for two different clients learned with FedAvg. Since prototypes are free to move, both prototypes and representations for the same class occupy different places in different clients. Fixing prototypes with a uniformity constraint addresses such an issue as shown in Figure~\ref{fig:feduhc0_spiral} and Figure~\ref{fig:feduhc1_spiral}.
    
    \item Independently, a related idea is also explored in non-FL setting~\cite{li2022targeted}. However, our method is different from this work since we our class prototypes takes into account the class semantic similarity while class prototypes are fixed and never get updated in~\cite{li2022targeted}.
\end{itemize}

\begin{figure}[!ht]
     \centering
     \begin{subfigure}[b]{0.23\textwidth}
         \centering
         \includegraphics[width=3.2cm]{./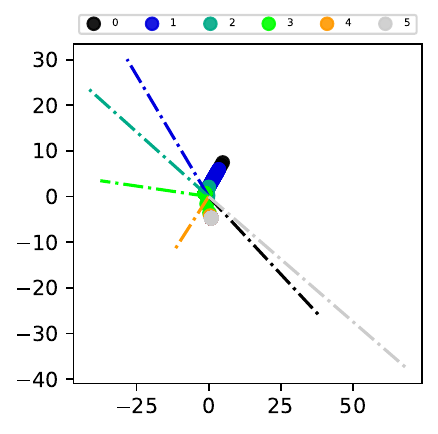}
         \caption{Client 0 of \fedavg{} }
         \label{fig:fedavgc0_spiral}
     \end{subfigure}
     \hfill
     \begin{subfigure}[b]{0.23\textwidth}
         \centering
         \includegraphics[width=3.2cm]{./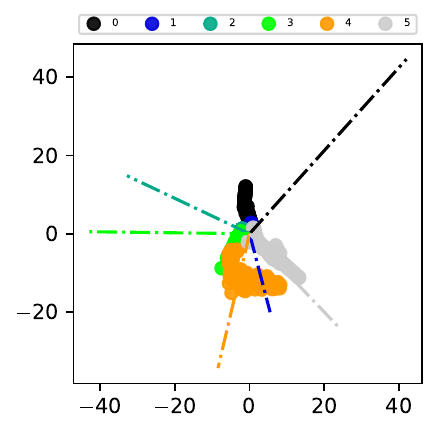}
         \caption{ Client 1 of \fedavg{}}
         \label{fig:fedavgc1_spiral}
     \end{subfigure}
    \hfill
    \begin{subfigure}[b]{0.23\textwidth}
         \centering
         \includegraphics[height=3.2cm]{./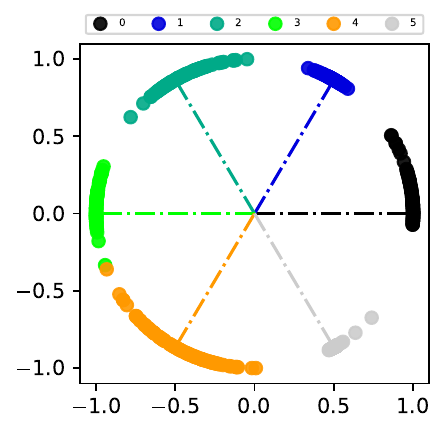}
         \caption{Client 0 of \fedavguh{} }
         \label{fig:feduhc0_spiral}
    \end{subfigure}  
    \hfill
     \begin{subfigure}[b]{0.23\textwidth}
         \centering
         \includegraphics[height=3.2cm]{./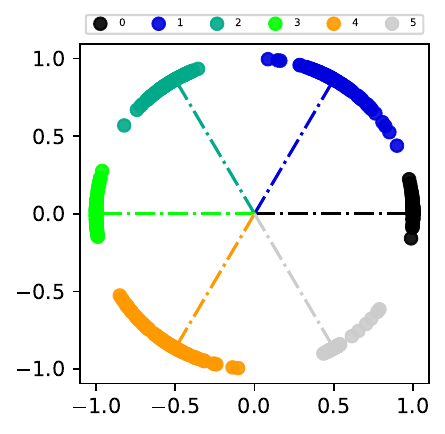}
         \caption{Client 1 of \fedavguh{} }
         \label{fig:feduhc1_spiral}
     \end{subfigure}     
    \caption{Learned representations are consistent for different clients for FedAvg with the uniform classification head compared with vanilla FedAvg. }
    \label{fig:consistent.goal}
\end{figure}

\subsection{Proposed Method}
As we demonstrated in the motivating example, class prototypes learned from balanced datasets are distributed uniformly in the latent space. Hence, encouraging class prototypes learned from unbalanced data to also be distributed uniformly may be a way to improve performance. We also note that since the motivating example was produced with synthetic data, class semantics were not discussed. In practice, some classes are more semantically similar to others. For example, the (wolf, dog) pair is more similar than the (wolf, plant) pair.
Capturing semantics helps with performance~\cite{su2012improving}.
This leads to two questions: 
\begin{itemize}
    \item (1) Given imbalanced data, how can we generate class prototypes such that pairs of prototypes are maximally separated and equidistant? 
    \item (2) How can we encourage class prototypes to learn class semantics?
\end{itemize}
We address the first question by proposing a parameter initialization scheme for the classification head during FL training. We answer the second question by proposing an approach to infuse class semantics throughout the training process. Both answers are described in more detail in the following sections.

\subsubsection{Initialization.} The server is responsible for initializing parameters for both the body and classification head. The parameters for the body are initialized using common techniques, like Kaiming initialization~\cite{he2015delving} and Xavier initialization~\cite{glorot2010understanding}. However, we initialize the parameter $W$ for the head in a way to address the first question. Specifically, assuming there are $|\Ccal|$ classes and the latent representation lives in a $d$-dimensional space, then
one can solve a constrained optimization problem~\eqref{prob:max-equi-sep}.
\begin{align}
& \max_{\{w_1,\cdots,w_{|\Ccal|},M\}}  M^2 \label{prob:max-equi-sep}\\
& \text{s.t.}  \norm{w_i-w_j}^2 \geq M^2, \norm{w_i}^2 = 1 \text{ for all } i\in[|\Ccal|], i\neq j\nonumber.
\end{align}
The first constraint is imposed to satisfy the maximally separated and equidistant requirements, while the second constraint is added to ensure all class prototypes are of the same magnitude. The problem can be solved by, for example, the interior point method \cite{wright1999numerical}. We remark that \cite{li2022targeted} uses a different approach to generate class prototypes, which does not ensure that each prototype is of the same magnitude. When $d$ and $|\Ccal|$ are large, the computation cost might be high. However, the computation only needs to be performed once at the initialization stage and saved for future uses. Lastly, we remark that the body of the network internally adds a normalization layer to its output such that $\norm{f(\theta;x)}=1$ for any $\theta$ and valid input $x$. This normalization technique adds no extra parameters and negligible computation cost. It is also used in face verification tasks \cite{wang2017normface,wang2018additive}.

\subsubsection{Client Update.} For the $t$th communication round, the $k$th client receives the body parameter $\theta^t$ and the head parameter $W^t$. Together with its local training dataset $\Dcal_k$, it aims to achieve two goals. The first goal is to learn a strong body that can extract representations used for classification. To this goal, the $k$th client approximately solves the problem~\eqref{prob:local-prob}.

\begin{align}\label{prob:local-prob}
    \theta_{k}^{t+1}\approx 
    \arg\min_{\theta} F(\theta;\Dcal_k,W^t,s) \text{ with } F(\theta;\Dcal_k,W^t,s):= \frac{1}{|\Dcal_k|} \sum_{(x_i,y_i)\in\Dcal_k}-\log\frac{\exp(s\cdot W_{y_i}^{t}f(\theta;x_i))}{\sum_{j=1}^{|\Ccal|}\exp(s\cdot W_j^{t}f(\theta;x_i))},
\end{align}
where $s$ is a fixed constant scalar. The $s$ is added to compensate for the fact that $\norm{W_{y_i}^{t}f(\theta;x_i))}\leq 1$ \footnote{Both the latent representation $f(\theta;x_i)$ and the prototype $W_{y_i}^{t}$ are normalized.}.  Specifically, the loss $F(\theta;\Dcal_k,W^t,s)$ is lower bounded by a quantity inversely proportional to $s\cdot\norm{W_{y_i}^{t}f(\theta;x_i))}$ \cite[Proposition 2]{wang2017normface}. If $s=1$, it hinders the optimization process of learning a strong body. In practice, $s$ can be set to some constants like 30 as suggested in \cite{wang2018additive}. The inverse of $s$ is also called temperature in knowledge distillation \cite{hinton2015distilling}. 

To approximately solve problem~\eqref{prob:local-prob}, we assume $J$ steps of stochastic gradient descent are performed, i.e.,
\begin{equation*}\label{eq:sgd-update}
    \theta_{k}^{t,j}=\theta_{k}^{t,j-1} - \eta_t G_k(\theta_{k}^{t,j-1};W^t) \text{ for } j\in[J],
\end{equation*}
where $\theta_k^{t,j}$ is the body parameter at the $k$th client in $t$th communication round after the $j$th local update and $G_k(\theta_{k}^{t,j-1};W^t)$ is the stochastic gradient estimator (See Assumption~\ref{ass:first-second-momentum}). It is clear that $\theta_{k}^{t,J}=\theta_{k}^{t}$ and $\theta_{k}^{t,0}=\theta^{t-1}$. It is worth mentioning that the classification head is fixed during the local training so that all selected clients have a consistent learning goal under the guidance of the same head $W^t$.

Once the local training is done, the client completes its second goal, i.e., computing the averaged representations per class contained in $\Ccal_k$ as \eqref{eq:local-proto-update}
\begin{equation}\label{eq:local-proto-update}
    \mu_{k,c}^{t+1} = 
    \begin{cases}
        \frac{1}{|\{i: y_i=c\}|}\sum_{\{i: y_i=c\}}r_{k,i}^{t+1}, & \text{ if } c\in\Ccal_k.\\
        0, &\text{ o.w. },
    \end{cases}
\end{equation}
where $\Ccal_k$ is the set of classes that the $k$th client owns and $r_{k,i}^{t+1}=f(\theta_{k}^{t+1};x_i,y_i)$ for ${(x_i,y_i)}\in\Dcal_k$. These averaged representations $\{\mu_{k,c}^{t+1}\}_{c\in\Ccal_k}$ provide a succinct local description of class prototypes for classes from $\Ccal_k$. Finally, both the body parameters $\theta_{k}^{t+1}$ and local prototypes $\{\mu_{k,c}^{t+1}\}_{c\in\Ccal_k}$ are sent back to the server for aggregation. 

\subsubsection{Server Update.} For the $t$th communication round, assume that clients from a set $\Scal^t$ respond to the server. Then the server aggregates the body by taking the average of the received $\{\theta_{k}^{t+1}\}_{k\in\Scal^t}$, which is the same as in FedAvg. 
However, for the global prototype update, we propose a new strategy that infuses class semantics, addressing the second question.
Specifically, for $c\in \Ccal$,
\begin{equation}\label{eq:prototype-update}
    W_c^{t+1} \gets \rho W_c^{t} +  (1-\rho) \sum_{k\in S^t}\alpha_{k}^{t+1}\mu^{t+1}_{k,c},
\end{equation}
where the aggregation weights $\{\alpha_k^{t+1}\}_{k\in\Scal^t}$ are some positive scalars and with smoothing parameter $\rho\in(0,1)$. A specific choice on these parameters are discussed in Appendix~\ref{appendix:impl.details}. We remark that $\rho$ is suggested to set close to 1 for two reasons. First, at the initial training stage, the body of a network is not well trained; hence the learned representations and the clients' proposed prototypes are less informative. In this stage, we utilize the uniformity prototypes initialized at the server to guide the local training. Second, due to the client sampling, it is ideal that prototypes at the server side changes gradually, hence using the $1-\rho$ to discount the contribution of local clients to the update of prototypes at the server side. The importance of $\rho$ will also be justified by the convergence analysis in Section~\ref{sec:convergence}. It is worth mentioning class prototypes are updated in a non-parametric manner instead of by gradients. 
We end this section by formally summarizing our method FedNH in Algorithm~\ref{alg:FedNH}.
\begin{algorithm}[t]
\algnewcommand\algorithmicinit{\textbf{Initialization:}}
\algnewcommand\Initialization{\item[\algorithmicinit]}
\algnewcommand\algorithmicoutput{\textbf{Output:}}
\algnewcommand\Output{\item[\algorithmicoutput]}
\caption{FedNH}
\begin{algorithmic}[1]
    \Require Total number of clients $K$; participation rate $\gamma$; number of communication rounds $R$; the smoothing parameter $\rho\in(0,1)$; a set of full classes $\Ccal$; the set of classes that the $k$th client owns as $\Ccal_k$.
    \Initialization  Set the class prototype $W\in\R{|\Ccal|\times d}$ with $W^0$ and the $\theta$ with $\theta^0$.
        \For{$t=0,\dots,R-1$ communication rounds}
            \State Sample a subset $\Scal^t$ of clients with $|\Scal^t| = \lceil\gamma K\rceil$.
            \State Broadcast $\{\theta^t, W^t\}$ to clients $k\in\Scal^t$.
            \For {each client $k \in\Scal^t$ \textbf{in parallel}}
                \State $\left(\theta_{k}^{t+1}, \mu_k^{t+1}\right) \gets \text{ClientUpdate}(\theta^t, W^t)$.
            \EndFor
            \State Perform the global class prototype update as \eqref{eq:prototype-update}.
            \State Normalize $W_c$ to have unit norm for all $c\in \Ccal$.
            \State Perform  aggregation as $\theta^{t+1} = \frac{1}{|\Scal^t|}\sum_{k\in\Scal^t}\theta_k^{t}.$
        \EndFor
    \Procedure{ClientUpdate}{$\theta^t, W^t$}
        \State Initialize the local representation network with $\theta^t$ and class prototypes with $W^t$.
        \State Approximately solve the problem \eqref{prob:local-prob} to obtain
        $\theta_{k}^{t+1}$.
        \State Perform the local class prototypes update as \eqref{eq:local-proto-update}.
        \State \textbf{Return} $\left(\theta_{k}^{t+1}, \mu_{k}^{t+1}\right)$.
\EndProcedure
\end{algorithmic}
\label{alg:FedNH}
\end{algorithm}

\subsection{Convergence Analysis}\label{sec:convergence}
To avoid cluttered notations, we denote the local loss $F_k(\theta;W)\equiv F(\theta;\Dcal_k,W,s)$. We first present a set of assumptions required to perform the convergence analysis.
\begin{assumption}\label{ass:smoothness}
For all $k\in[K]$, $F_k(\theta;W)$ is bounded below and $F_k(\theta;W)$ is $L_g$ smooth with respect to its first argument, i.e., for any fixed $W\in\R{d_2}$ and $\text{ for all } (\theta_1,\theta_2) \in \R{d_1}\times\R{d_1}.$
$$
\norm{\grad_\theta F_k(\theta_1;W)-\grad_\theta F_k(\theta_2;W)}\leq L_g\norm{\theta_1-\theta_2}. 
$$
\end{assumption}

\begin{assumption}\label{ass:first-second-momentum}
Denote $G_k(\theta;W)$ as the stochastic gradient, where only a random subset of $\Dcal_k$ is used to estimate the true partial gradient $\grad_\theta F_k(\theta;W)$. There is a constant $\sigma>0$, such that for all $k\in[K]$ and $(\theta, W)\in\R{d_1}\times \R{d_2}$,
$$
\begin{aligned}
  &\Embb[G_k(\theta;W)] = \grad_\theta F_k(\theta;W) \text{ and } \Embb[\norm{G_k(\theta;W) - \grad_\theta F_k(\theta;W)}^2]\leq \sigma^2.  
\end{aligned}
$$
\end{assumption}

\begin{assumption}\label{ass:bounded-gradient}
Assume $\Embb\left[\norm{G(\theta;W)}\right]^2\leq M_G^2$ and $\norm{\grad_\theta f(\theta;x,y)}\leq M_f$  for all $(\theta,W)\in \Bcal$ and $(x,y)\in\bigcup_k \Dcal_k$, where $\Bcal$ is a bounded set.
\end{assumption}

Assumption~\ref{ass:smoothness} is parallel to the common smoothness assumption made in the literature\cite{li2020federated,karimireddy2020scaffold,acar2020federated}, where we slightly loose it by only requiring $F_k$ to be smooth with respect to its first argument. Assumption~\ref{ass:first-second-momentum} is also a standard assumption on the first and second order momentum of the stochastic gradient estimate $G_k(\theta;W)$. Assumption~\ref{ass:bounded-gradient} imposes bound on the gradient, which is reasonable as long as $(\theta, W)$ remains bounded and it is also assumed in \cite{stich2019local, yu2019parallel, tan2022fedproto}.

We aim to establish the convergence results for each client. The challenge in the analysis is that, for each round $t$, the objective function $F_k(\theta;W^t)$ is different as $W^t$ changes. Thus, the type of convergence result we can expect is that for some $(\theta_k^{t,j},W^t)$, the $\norm{\grad_\theta F_k(\theta^{t,j};W^t)}$ is small. An informal statement of the convergence result is given below. The formal statement and proofs can be found at Theorem~\ref{thm:formal} in Appendix~\ref{appendix:complated-convergence-analysis}.
\begin{theorem}[Informal]\label{thm:convergence}
Let the $k$th client uniformly at random returns an element from $\{\theta_{k}^{t,j}\}$ as the solution, denoted as $\theta_k^*$. Further, let $W^*$ share the same round index as $\theta_k^*$. Then for any $\epsilon>0$, set $\rho\in(\nu_1(\epsilon,M_G, M_f), 1)$ and $\eta\in \left(0, \nu_2(\epsilon, L_g, \sigma^2, \rho, M_G, M_f)\right)$, if $R>\Ocal(\epsilon^{-1})$, one gets
$$
\Embb\left[\norm{\grad_{\theta} F_k(\theta_{k}^{*};W^*)}^2\right] \leq \epsilon,
$$
where $\nu_1(\epsilon,M_G, M_f)$, $\nu_2(\epsilon, L_g, \sigma^2, \rho, M_G, M_f)$, $M_G$, and $M_f$ are some positive constants.
\end{theorem}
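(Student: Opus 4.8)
The plan is to run a standard non-convex SGD descent argument \emph{per communication round} (where the head $W^t$ is frozen, so the objective $F_k(\cdot;W^t)$ is fixed and $L_g$-smooth by Assumption~\ref{ass:smoothness}) and then stitch the rounds together, paying a controllable price each time the head is refreshed. Concretely, for a fixed round $t$ and local step $j$, the descent lemma applied to $F_k(\cdot;W^t)$ together with the unbiasedness in Assumption~\ref{ass:first-second-momentum} gives
\begin{equation*}
\Embb[F_k(\theta_k^{t,j};W^t)] \le \Embb[F_k(\theta_k^{t,j-1};W^t)] - \eta\,\Embb[\norm{\grad_\theta F_k(\theta_k^{t,j-1};W^t)}^2] + \tfrac{L_g\eta^2}{2}\Embb[\norm{G_k(\theta_k^{t,j-1};W^t)}^2],
\end{equation*}
and the second moment of the stochastic gradient is bounded by $M_G^2+\sigma^2$ after combining the variance bound of Assumption~\ref{ass:first-second-momentum} with $\norm{\grad_\theta F_k}=\norm{\Embb[G_k]}\le M_G$ from Assumption~\ref{ass:bounded-gradient}. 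Summing over $j\in[J]$ turns this into a one-round progress bound whose boundary terms are $F_k(\theta_k^{t,0};W^t)$ and $F_k(\theta_k^{t,J};W^t)$.

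The main work is the telescoping across $t$. Between consecutive rounds the objective changes from $F_k(\cdot;W^t)$ to $F_k(\cdot;W^{t+1})$, so the per-round bounds do not chain directly; I would insert and bound the mismatch $F_k(\theta;W^{t+1})-F_k(\theta;W^t)$, and I expect this to be the crux. First I would show $\norm{W^{t+1}-W^t}=\Ocal(1-\rho)$: the aggregation \eqref{eq:prototype-update} is an exponential moving average $W_c^{t+1}=\rho W_c^t+(1-\rho)\sum_{k}\alpha_k^{t+1}\mu_{k,c}^{t+1}$ with $\norm{W_c^t}=1$ and $\norm{\mu_{k,c}^{t+1}}\le 1$ (averaged unit-norm features), and the subsequent renormalization is Lipschitz near the sphere, so the displacement is proportional to $1-\rho$. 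Then, since the loss is a temperature-scaled softmax cross-entropy over unit-norm features, $F_k(\cdot;W)$ is Lipschitz in $W$ with constant $\Ocal(s)$, giving $|F_k(\theta;W^{t+1})-F_k(\theta;W^t)|=\Ocal(1-\rho)$; the companion estimate $\norm{\grad_\theta F_k(\theta;W^{t+1})-\grad_\theta F_k(\theta;W^t)}=\Ocal(M_f(1-\rho))$ (chain rule through $\grad_\theta f$, bounded by $M_f$ in Assumption~\ref{ass:bounded-gradient}) controls the mismatch at the gradient level and is where $M_f$ enters. The residual gap between the client's end-of-round body $\theta_k^{t,J}$ and the averaged body it receives as $\theta_k^{t+1,0}$ is the familiar FedAvg client-drift term, which I would bound by $\Ocal(\eta J M_G)$ using Assumption~\ref{ass:bounded-gradient} and smoothness, contributing only terms that vanish as $\eta\to 0$.

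Putting the pieces together, I would telescope the per-round bounds, collecting three error sources: the initialization gap $F_k(\theta_k^{0,0};W^0)-\inf F_k$, the SGD-noise term $\propto L_g\eta(M_G^2+\sigma^2)$, and the head-refresh term $\propto M_f(1-\rho)$. Dividing the accumulated inequality by $\eta R J$ yields
\begin{equation*}
\frac{1}{RJ}\sum_{t,j}\Embb[\norm{\grad_\theta F_k(\theta_k^{t,j};W^t)}^2] \le \frac{F_k(\theta_k^{0,0};W^0)-\inf F_k}{\eta R J} + \Ocal(L_g\eta(M_G^2+\sigma^2)) + \Ocal\!\left(\frac{M_f(1-\rho)}{\eta J}\right).
\end{equation*}
I would then fix the constants in the stated order: take $\rho>\nu_1(\epsilon,M_G,M_f)$ close enough to $1$ to push the last term below $\epsilon/3$, then $\eta<\nu_2(\epsilon,L_g,\sigma^2,\rho,M_G,M_f)$ small enough for the middle term, and finally $R>\Ocal(\epsilon^{-1})$ for the first. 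Since $\theta_k^*$ is drawn uniformly from $\{\theta_k^{t,j}\}$ and $W^*$ shares its round index, the left-hand average is exactly $\Embb[\norm{\grad_\theta F_k(\theta_k^*;W^*)}^2]$, which closes the argument. The principal obstacle is the moving-target head: unlike vanilla FedAvg, the local objective itself drifts each round, so the entire analysis hinges on quantifying that drift as $\Ocal(1-\rho)$ and on $\rho$ serving as the knob that renders it negligible.
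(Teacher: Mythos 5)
Your skeleton matches the paper's proof: a frozen-head descent lemma inside each round, an explicit mismatch term when the head is refreshed, telescoping over rounds, and uniform sampling of the output index. The genuine gap is in how you bound the head refresh. You bound $\norm{W^{t+1}-W^t}=\Ocal(1-\rho)$ purely from boundedness (unit-norm EMA of unit-norm prototypes), which makes the per-round head cost $\Ocal(1-\rho)$ and, after dividing the telescoped sum by $\eta RJ$, leaves the term $\Ocal\bigl(M_f(1-\rho)/(\eta J)\bigr)$ in your final display. That term depends on \emph{both} $\rho$ and $\eta$, so your parameter-selection plan is circular: the theorem fixes $\rho\in(\nu_1(\epsilon,M_G,M_f),1)$ \emph{before} $\eta$, with $\nu_1$ independent of $\eta$, but once you later shrink $\eta$ to kill the $\Ocal(L_g\eta(M_G^2+\sigma^2))$ term, your head term $(1-\rho)/(\eta J)$ blows back up. With your drift estimate you could only prove a variant in which $\rho$ is chosen as a function of $\eta$ (i.e., $1-\rho\lesssim \epsilon\eta J/M_f$), not the statement with its stated quantifier order.

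The missing idea --- and the reason $M_f$ appears in $\nu_1$ --- is that the prototypes are not arbitrary bounded vectors: they are averaged features of bodies that move only $\Ocal(\eta JM_G)$ per round. The paper uses Assumption~\ref{ass:bounded-gradient} to get $\norm{\mu_{k,c}^{t}-\mu_{k,c}^{t-1}}\leq M_f\norm{\theta_k^{t,J}-\theta_k^{t-1,J}}\leq \Ocal(M_f\eta JM_G)$ in expectation, and then unrolls the EMA recursion $W^t-W^{t-1}=\rho\,(W^{t-1}-W^{t-2})+(1-\rho)(\bar\mu^{t}-\bar\mu^{t-1})$ so that $\norm{W^t-W^{t-1}}$ is $(1-\rho)$ times a sum of these $\Ocal(\eta)$ prototype drifts. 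Hence the per-round head cost is $\Ocal\bigl((1-\rho)\eta JM_GM_pM_f|\Ccal|\bigr)$ (with $M_p$ the Lipschitz constant of $F_k$ in $W$ from Lemma~\ref{lemma:bound-grad}), i.e., proportional to $\eta$; after dividing by $\eta RJ$ it contributes $\Ocal\bigl((1-\rho)M_GM_pM_f|\Ccal|\bigr)$, independent of $\eta$, which is exactly what lets $\rho$ be fixed first and $\eta$ second. A secondary caution: your claim that the FedAvg averaging gap ``contributes only terms that vanish as $\eta\to 0$'' does not follow from the Lipschitz-times-drift route you describe --- that route gives $\Ocal(\eta JM_G^2)$ per round, hence a non-vanishing $\Ocal(M_G^2)$ after normalization by $\eta RJ$; the paper itself only absorbs this term into $(1-\rho)\kappa$ by the artifice $M_\iota=M_G/(1-\rho)$, so this piece requires more careful treatment than either argument currently gives.
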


\begin{table*}[!th]
\fontsize{9}{9}
\begin{tabular}{@{}c|lcccccc@{}}
\toprule
Dataset                                            & \multicolumn{1}{c|}{Method}    & \multicolumn{3}{c|}{Dir(0.3)}                                                               & \multicolumn{3}{c}{Dir(1.0)}                                          \\ \midrule
\multicolumn{1}{c|}{}                              & \multicolumn{1}{c|}{}         & GM                    & PM(V)                 & \multicolumn{1}{c|}{PM(L)}                 & GM                    & PM(V)                 & PM(L)                 \\ \midrule
\multicolumn{1}{c|}{\multirow{9}{*}{Cifar10}}      & \multicolumn{1}{l|}{Local}    & ---                   & 42.79 ± 2.45          & \multicolumn{1}{c|}{71.57 ± 1.82}          & ---                   & 41.20 ± 1.27          & 58.34 ± 1.03          \\
\multicolumn{1}{c|}{}                              & \multicolumn{1}{l|}{FedAvg}   & 66.40 ± 3.13          & 63.10 ± 1.33          & \multicolumn{1}{c|}{\underline{84.08 ± 2.31}} & 73.07 ± 1.60          & 68.07 ± 1.23          & \underline{79.12 ± 2.11} \\
\multicolumn{1}{c|}{}                              & \multicolumn{1}{l|}{FedPer}   & 61.58 ± 0.43          & 59.66 ± 2.34          & \multicolumn{1}{c|}{82.38 ± 1.50}          & 63.33 ± 0.53          & 60.66 ± 2.17          & 73.40 ± 1.36          \\
\multicolumn{1}{c|}{}                              & \multicolumn{1}{l|}{Ditto}    & 66.40 ± 3.13          & 53.69 ± 1.11          & \multicolumn{1}{c|}{80.08 ± 2.17}          & 73.07 ± 1.60          & 61.22 ± 1.77          & 74.78 ± 2.16          \\
\multicolumn{1}{c|}{}                              & \multicolumn{1}{l|}{FedRep}   & 40.13 ± 0.17          & 56.47 ± 2.31          & \multicolumn{1}{c|}{80.22 ± 2.45}          & 47.92 ± 0.38          & 55.06 ± 2.27          & 68.99 ± 1.27          \\
\multicolumn{1}{c|}{}                              & \multicolumn{1}{l|}{FedProto} & ---                   & 41.48 ± 1.02          & \multicolumn{1}{c|}{68.35 ± 1.75}          & ---                   & 39.65 ± 1.33          & 53.23 ± 1.78          \\
\multicolumn{1}{c|}{}                              & \multicolumn{1}{l|}{CReFF}    & 66.46 ± 1.40          & 63.10 ± 2.16          & \multicolumn{1}{c|}{\underline{84.08 ± 2.31}} & 71.63 ± 0.61       & 68.07 ± 1.44         & \underline{79.12 ± 2.11}          \\
\multicolumn{1}{c|}{}                              & \multicolumn{1}{l|}{FedBABU}  & 62.78 ± 3.09          & 60.58 ± 2.16          & \multicolumn{1}{c|}{82.64 ± 1.03}          & 70.34 ± 1.72          & 65.49 ± 1.44          & 77.35 ± 1.80          \\
\multicolumn{1}{c|}{}                              & \multicolumn{1}{l|}{FedROD}   & \textbf{72.31 ± 0.16} & \textbf{65.66 ± 1.27} & \multicolumn{1}{c|}{83.44 ± 1.03}          & \textbf{75.50 ± 0.15} & \underline{69.18 ± 1.98} & 77.84 ± 1.76          \\
\multicolumn{1}{c|}{}                              & \multicolumn{1}{l|}{FedNH}    & \underline{69.01 ± 2.51} & \underline{65.02 ± 1.23} & \multicolumn{1}{c|}{\textbf{84.63 ± 2.11}} & \underline{75.34 ± 0.86} & \textbf{69.64 ± 1.15} & \textbf{79.53 ± 2.14} \\ \midrule
\multicolumn{1}{c|}{\multirow{9}{*}{Cifar100}}     & \multicolumn{1}{l|}{Local}    & ---                   & 13.63 ± 2.45          & \multicolumn{1}{c|}{30.89 ± 1.82}          & ---                   & 9.44 ± 1.27           & 16.71 ± 1.03          \\
\multicolumn{1}{c|}{}                              & \multicolumn{1}{l|}{FedAvg}   & \underline{35.14 ± 0.48} & \underline{31.85 ± 1.33} & \multicolumn{1}{c|}{\underline{50.77 ± 2.31}} & \underline{36.07 ± 0.41} & \underline{28.86 ± 1.23} & \underline{38.35 ± 2.11} \\
\multicolumn{1}{c|}{}                              & \multicolumn{1}{l|}{FedPer}   & 15.04 ± 0.06          & 16.15 ± 2.34          & \multicolumn{1}{c|}{33.10 ± 1.50}          & 14.69 ± 0.03          & 11.61 ± 2.17          & 19.08 ± 1.36          \\
\multicolumn{1}{c|}{}                              & \multicolumn{1}{l|}{Ditto}    & 35.14 ± 0.48          & 26.19 ± 1.11          & \multicolumn{1}{c|}{45.91 ± 2.17}          & 36.07 ± 0.41          & 22.92 ± 1.77          & 32.81 ± 2.16          \\
\multicolumn{1}{c|}{}                              & \multicolumn{1}{l|}{FedRep}   & 5.42 ± 0.03           & 13.59 ± 2.31          & \multicolumn{1}{c|}{29.45 ± 2.45}          & 6.37 ± 0.04           & 9.47 ± 2.27           & 16.07 ± 1.27          \\
\multicolumn{1}{c|}{}                              & \multicolumn{1}{l|}{FedProto} & ---                   & 10.64 ± 1.02          & \multicolumn{1}{c|}{19.11 ± 1.75}          & ---                   & 9.24 ± 1.33           & 12.61 ± 1.78          \\
\multicolumn{1}{c|}{}                              & \multicolumn{1}{l|}{CReFF}    & \underline{22.90 ± 0.30} & \underline{31.85 ± 1.33} & \multicolumn{1}{c|}{\underline{50.77 ± 2.31}} & \underline{22.21 ± 0.15} & \underline{28.86 ± 1.23} & \underline{38.35 ± 2.11} \\
\multicolumn{1}{c|}{}                              & \multicolumn{1}{l|}{FedBABU}  & 32.41 ± 0.40          & 28.96 ± 2.16          & \multicolumn{1}{c|}{47.86 ± 1.03}          & 32.34 ± 0.49          & 25.84 ± 1.44          & 34.85 ± 1.80          \\
\multicolumn{1}{c|}{}                              & \multicolumn{1}{l|}{FedROD}   & 33.83 ± 0.25          & 28.53 ± 1.27          & \multicolumn{1}{c|}{42.93 ± 1.03}          & 35.20 ± 0.19          & 27.58 ± 1.98          & 33.44 ± 1.76          \\
\multicolumn{1}{c|}{}                              & \multicolumn{1}{l|}{FedNH}    & \textbf{41.34 ± 0.25} & \textbf{38.25 ± 1.23} & \multicolumn{1}{c|}{\textbf{55.21 ± 2.11}} & \textbf{43.19 ± 0.24} & \textbf{36.88 ± 1.15} & \textbf{45.46 ± 2.14} \\ \midrule
\multicolumn{1}{c|}{\multirow{9}{*}{TinyImageNet}} & \multicolumn{1}{l|}{Local}    & ---                   & 7.55 ± 2.45           & \multicolumn{1}{c|}{19.94 ± 1.82}          & ---                   & 5.10 ± 1.27           & 9.93 ± 1.03           \\
\multicolumn{1}{c|}{}                              & \multicolumn{1}{l|}{FedAvg}   & 34.63 ± 0.26          & 27.35 ± 1.33          & \multicolumn{1}{c|}{44.97 ± 2.31}          & 37.65 ± 0.37          & 28.82 ± 1.23          & 37.15 ± 2.11          \\
\multicolumn{1}{c|}{}                              & \multicolumn{1}{l|}{FedPer}   & 15.28 ± 0.14          & 13.84 ± 2.34          & \multicolumn{1}{c|}{30.72 ± 1.50}          & 13.71 ± 0.07          & 9.82 ± 2.17           & 17.05 ± 1.36          \\
\multicolumn{1}{c|}{}                              & \multicolumn{1}{l|}{Ditto}    & 34.63 ± 0.26          & 23.85 ± 1.11          & \multicolumn{1}{c|}{42.67 ± 2.17}          & 37.65 ± 0.37          & 24.97 ± 1.77          & 34.70 ± 2.16          \\
\multicolumn{1}{c|}{}                              & \multicolumn{1}{l|}{FedRep}   & 3.27 ± 0.02           & 9.24 ± 2.31           & \multicolumn{1}{c|}{22.86 ± 2.45}          & 3.91 ± 0.03           & 5.76 ± 2.27           & 10.86 ± 1.27          \\
\multicolumn{1}{c|}{}                              & \multicolumn{1}{l|}{FedProto} & ---                   & 5.17 ± 1.02           & \multicolumn{1}{c|}{10.44 ± 1.75}          & ---                    & 4.21 ± 1.33           & 6.34 ± 1.78           \\
\multicolumn{1}{c|}{}                              & \multicolumn{1}{l|}{CReFF}    & 25.82 ± 0.41          & 27.35 ± 1.33          & \multicolumn{1}{c|}{44.97 ± 2.31}          & 27.87 ± 0.38          & 28.82 ± 1.23          & 37.15 ± 2.11          \\
\multicolumn{1}{c|}{}                              & \multicolumn{1}{l|}{FedBABU}  & 26.36 ± 0.32          & 20.85 ± 2.16          & \multicolumn{1}{c|}{37.96 ± 1.03}          & 30.25 ± 0.32          & 22.74 ± 1.44          & 31.01 ± 1.80          \\
\multicolumn{1}{c|}{}                              & \multicolumn{1}{l|}{FedROD}   & \underline{36.46 ± 0.28} & \underline{28.23 ± 1.27} & \multicolumn{1}{c|}{\underline{45.26 ± 1.03}} & \underline{37.71 ± 0.31} & \underline{29.65 ± 1.98} & \textbf{38.43 ± 1.76} \\
\multicolumn{1}{c|}{}                              & \multicolumn{1}{l|}{FedNH}    & \textbf{36.71 ± 0.36} & \textbf{30.99 ± 1.23} & \multicolumn{1}{c|}{\textbf{46.14 ± 2.11}} & \textbf{38.68 ± 0.30} & \textbf{30.58 ± 1.15} & \underline{38.25 ± 2.14} \\ \bottomrule
\end{tabular}
\caption{Comparison of testing accuracy. The best results are in bold font while the second best results are underlined. The lines ``---'' represent results are not available. The numbers (mean ± std) are the average of three independent runs.}
\label{tab:acc}
\end{table*}
%

\section{Experiments}\label{sec:experiments}
\subsection{Setups, Evaluation, and Baselines}
\textbf{Setups.}~~Extensive experiments are conducted on Cifar10, Cifar100
~\cite{krizhevsky2009learning}, 
and TinyImageNet
three popular benchmark datasets. We follow~\cite{chen2021bridging} to use a simple Convolutional neural network for both Cifar10 and Cifar100 datasets while we use Resnet18~\cite{he2016deep} for the TinyImageNet. To simulate the heterogeneity with class imbalance, we follow~\cite{lin2020ensemble} to distribute each class to clients using the Dirichlet($\beta$) distribution with $\beta\in\{0.3, 1.0\}$
, resulting in clients having different class distributions and different number of samples. Note that when $\beta\leq 1.0$, each client is likely to have one or two dominating classes while owning a few or even zero samples from the remaining classes. Consequently, both classes and the number of samples are imbalanced among clients. To simulate the cross-device setting, we consider 100 clients with a 10\% participation ratio.

\noindent\textbf{Evaluation Metrics.}~~ We evaluate the testing accuracy of both global and personalized models. We take the latest local models as the personalized model for methods that do not explicitly produce personalized models. For PFL methods that do not upload the head to the server, we use the head initialized at the starting point.
We follow~\cite{chen2021bridging} to evaluate personalized models on a single testing \textit{balanced} dataset $\Dcal^{\text{test}}$ to reduce the randomness from dataset partitioning. Specifically, the accuracy of the $i$th personalized model is computed as 
$$
\text{acc}_i = \frac{\sum_{(x_j,y_j)\sim \Dcal^{\text{test}} } \alpha_{i}\left(y_{j}\right) \mathbf{1}\left(y_{j}=\hat{y}_{j}\right)}{\sum_{(x_j,y_j)\sim \Dcal^{\text{test}}} \alpha_{i}\left(y_{j}\right)},
$$
where $\alpha_i(\cdot)$ is a positive-valued function and $\mathbf{1(\cdot)}$ is the indicator function. $y_j$ and $\hat y_j$ are the true label and predicted label, respectively. We consider two choices of setting $\alpha_i(\cdot)$. The first choice is to set $\alpha_i(y) = \Pmbb_i(y=c)$, where $\Pmbb_i(y=c)$ stands for the probability that the sample $y$ is from class $c$ in the $i$th client. The probability $\Pmbb_i(y=c)$ can be estimated from the $i$th client's training dataset. The second choice sets $\alpha_i(y)$ to $1$ if the class $y$ appears in $i$th client's training dataset and $0$ otherwise. This metric measures the generalization ability of personalized models because it treats all classes presented in the local training dataset with equal importance.

\noindent\textbf{Baselines.}~~We choose several popular and state-of-the-art FL/PFL algorithms, such as FedAvg~\cite{mcmahan2017communication}, FedPer~\cite{arivazhagan2019federated}, Ditto~\cite{li2021ditto}, FedRep~\cite{collins2021exploiting},
FedProto~\cite{tan2022fedproto}, FedBABU~\cite{oh2021fedbabu}, FedROD~\cite{chen2021bridging}, and CReFF~\cite{shang2022federated}. 

Experimental environments and implementation details on all chosen methods are deferred to 
Appendix~\ref{appendix:experiments}\footnote{Our code is publicly available at \url{https://github.com/Yutong-Dai/FedNH}.}.

\begin{figure}[t]
     \centering
     \begin{subfigure}[b]{0.48\textwidth}
         \centering
         \includegraphics[width=5cm]{./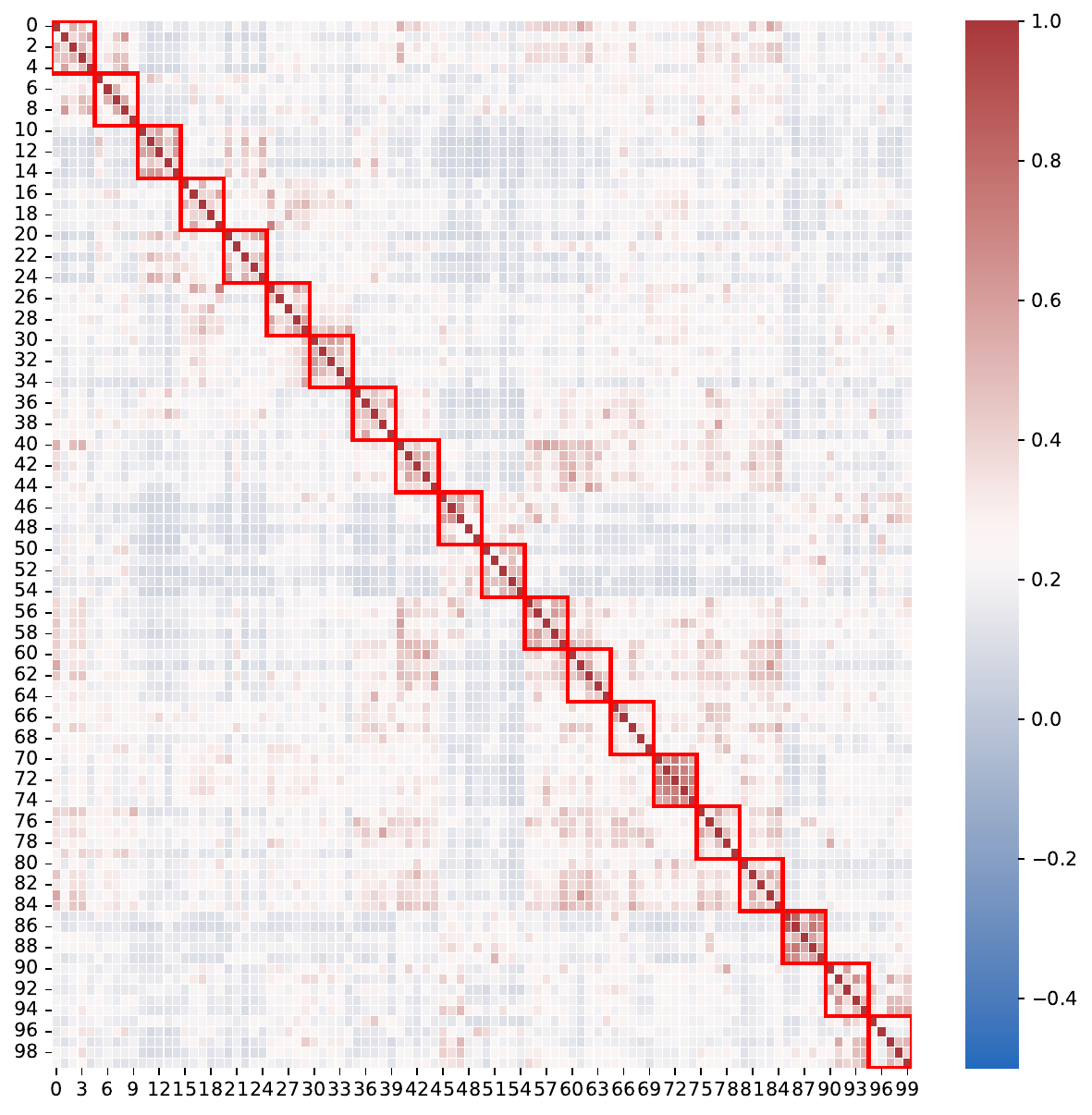}
     \end{subfigure}
     \hfill
     \begin{subfigure}[b]{0.48\textwidth}
         \centering
         \includegraphics[width=5cm]{./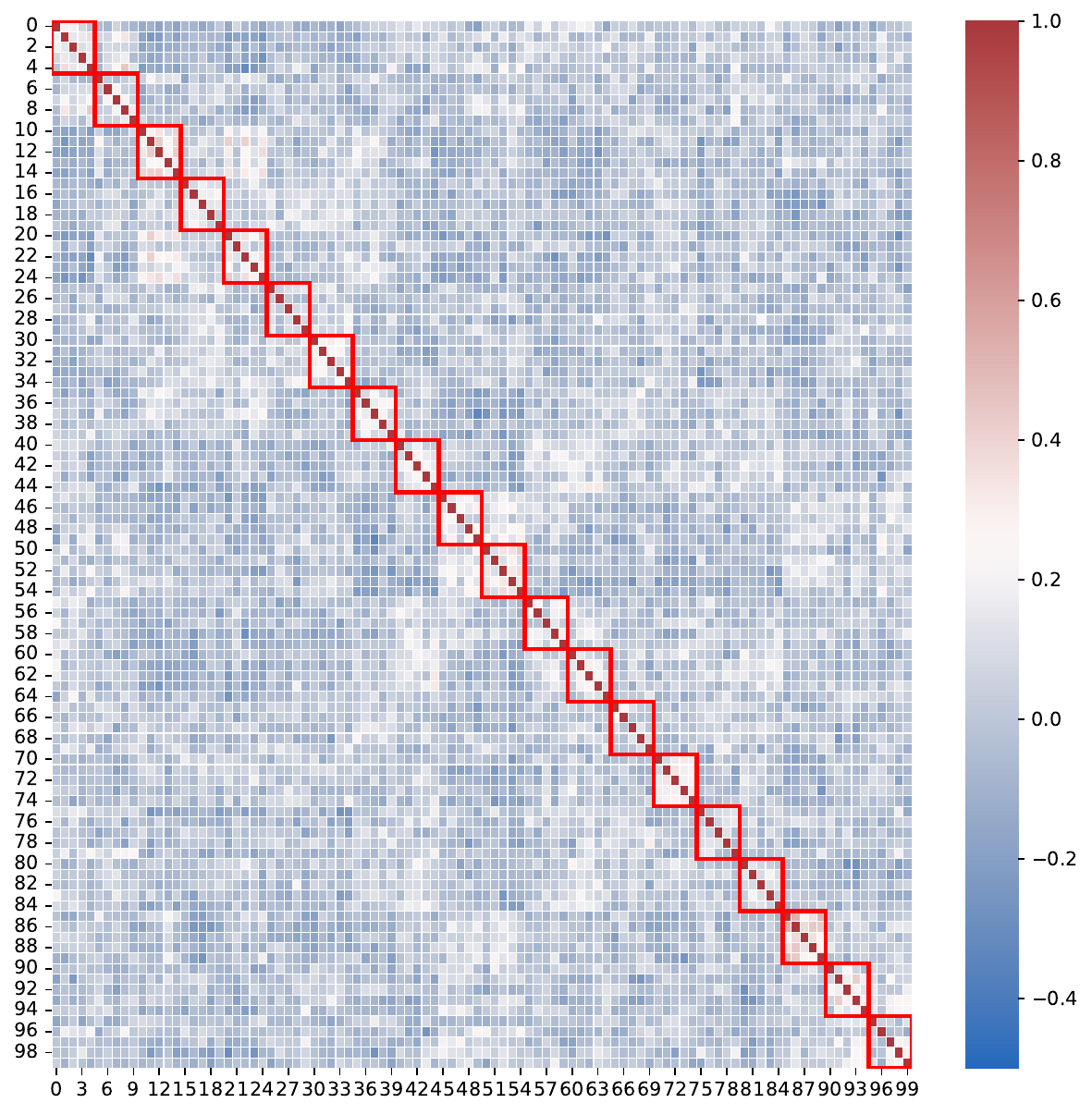}
     \end{subfigure}
        \caption{Similarity of prototypes learned with  FedNH (left) and FedAvg (right).}
        \label{fig:sim_fednh_avg}
\end{figure}

\subsection{Results}
We report three types of testing accuracy in Table~\ref{tab:acc}. Numbers under the GM column are the averaged testing accuracy for the global model (mean$\pm$standard deviation); PM(V) reports the averaged testing accuracy of personalized models by assigning equal weights to all appearing classes (the second choice of setting the weight function $\alpha_i(\cdot)$); PM(L) represents the averaged testing accuracy by assuming the training and testing dataset has the same distribution (the first choice of setting the weight function $\alpha_i(\cdot)$). Several observations are made:
\begin{itemize}
    \item Personalized models from FedAvg serve as the strong baseline (evaluated under both PM(L) and PM(v) metrics) and can outperform many specialized personalized methods. Similar observations are also made in~\cite{oh2021fedbabu,chen2021bridging}.
    \item In the cross-device setting, the performance gain of the personalized models from FedNH are the most consistent across all tested cases. Further, FedNH has the best PM(V) testing accuracy in almost all cases. Moreover, it adds minimum computation overhead compared with state-of-the-art methods like FedROD. 
    \item The PM(V) metric is less sensitive to the class imbalance than PM(L). Note that the accuracy of PM(L) changes significantly from the Dir(0.3) to Dir(1.0), while PM(L) is relatively stable, as shown in Table~\ref{tab:acc}.
    \item A strong global model often leads to strong personalized models by comparing the GM and PM metrics.
\end{itemize}
Discussions on why some methods like FedProto and CReFF do not perform well are discussed in Appendix~\ref{appendix:accuracy-discussions}.

\subsection{Analysis}~\label{sec:exp.analysis}
\textbf{Capture of Class Semantics.}
To further validate that prototypes learned by FedNH can capture the class semantics, we visualize the pairwise class prototypes' similarity from Cifar100 in Figure~\ref{fig:sim_fednh_avg}. 100 classes in Cifar100 form 20 super-classes.
In Figure~\ref{fig:sim_fednh_avg}, we group classes from one super-class into one red box along the main diagonal. Because the classes within one red box are semantically closer, their prototypes should also be more similar. We can see that our FedNH learns the semantics of classes by capturing their fine-grained similarities while the FedAvg simply treats all classes as different. Similar plots for other methods are given in Appendix~\ref{appendix:experiments} 
with a more detailed analysis. We also visualize learned representations in  Figure~\ref{fig:local.cifar100} Appendix~\ref{appendix:experiments}) 
for different classes on different clients and find that similar classes' representations tend to be more close.

\textbf{Sensitivity Analysis.}
We conduct the sensitivity analysis on the smoothing parameter $\rho$ by choosing it from $\{0.1, 0.3, 0.5, 0.7, 0.9\}$ and plotting the relative performance gain over the base case $\rho=0.1$. As shown in Figure~\ref{fig:sensitivity}, $\rho=0.1$ gives the worst performance. This matches Theorem~\ref{thm:convergence} that suggests $\rho$ cannot be too small.
\begin{figure}[!th]
    \centering
    \includegraphics[width=0.4\textwidth]{./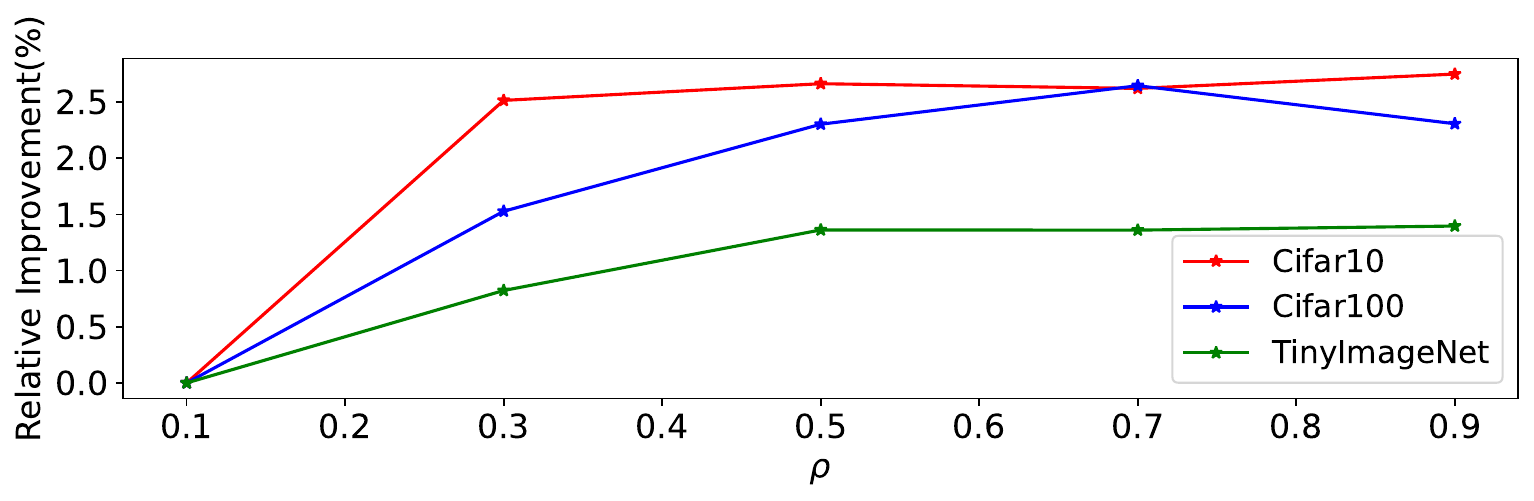}
    \caption{Sensitivity analysis on the smoothing parameter $\rho$.}
    \label{fig:sensitivity}
\end{figure}

\textbf{Fairness Analysis.}
We measure fairness by computing the standard deviation of the accuracy of personalized models across all clients. A smaller standard deviation indicates that all clients' performance tend to be concentrated around the mean accuracy. In Table~\ref{tab:fairness}, we present the result in the form of mean accuracy ± standard deviation. One can see that FedNH improves all clients more equally.
\begin{table}[!th]
    \centering
    \resizebox{0.47\textwidth}{!}{
    \begin{tabular}{l|lll}
    \toprule
     & Cifar10 & Cifar100 & TinyImageNet \\
    \midrule
    Local & 71.57 ± 10.13 & 30.89 ± 4.60 & 19.94 ± 3.27 \\
    FedAvg & 84.08 ± 6.80 & 50.77 ± 4.23 & 44.97 ± 2.99 \\
    FedPer & 82.38 ± 6.38 & 33.10 ± 4.26 & 30.72 ± 3.21 \\
    Ditto & 80.08 ± 7.83 & 45.91 ± 4.10 & 42.67 ± 3.05 \\
    FedRep & 80.22 ± 7.06 & 29.45 ± 4.19 & 22.86 ± 3.20 \\
    FedProto & 68.35 ± 11.03 & 19.11 ± 4.17 & 10.44 ± 2.80 \\
    FedBABU & 82.64 ± 6.11 & 47.86 ± 3.89 & 37.96 ± 2.79 \\
    FedROD & 83.44 ± 5.89 & 42.93 ± 4.10 & 45.26 ± 2.72 \\
    FedNH & 84.63 ± \textbf{5.61} & 55.21 ± \textbf{3.91} & 46.14 ±\textbf{ 2.70} \\
    \bottomrule
    \end{tabular}
    }
    \caption{Compare the fairness of baseline methods.}
    \label{tab:fairness}
\end{table}
\section{Conclusion}
In this work, we proposed FedNH, a novel FL algorithm to address the data heterogeneity with class imbalance. FedNH combines the uniformity and semantics of class prototypes to learn high-quality representations for classification. Extensive experiments were conducted to show the effectiveness and robustness of our method over recent works.

\textbf{Limitation and Future Work.} Our idea currently only applies to the classification task, and the inductive bias from uniformity and semantics of class prototypes can only be imposed on the head of neural network architecture. Our future work will explore the possibility of extending the inductive bias to the intermediate layers of a neural network and different vision tasks.

\section*{Acknowledgement.} This work was partially supported by the National Science Foundation Grants CRII-2246067.

\bibliographystyle{unsrt}
\bibliography{ref.bib}

\newpage
\appendix
\onecolumn
\section{Experiments}\label{appendix:experiments}

\subsection{Description of Datasets}\label{appendix:datasets.}
Cifar10 contains $10$ classes of images, and each class has $5000$ images in the training dataset and $1000$ images in the testing dataset, respectively. Similarly, Cifar100 contains $100$ classes of images, and each class has $500$ images in the training dataset and $100$ images in the testing dataset, respectively.
 TinyImageNet contains 200 image classes, and each class has $500$ and $50$ images in the training and testing datasets, respectively. Images are of resolution $64\times 64$.

\subsection{Implementation Details}\label{appendix:impl.details}

We use Pytorch~\cite{PyTorch} to implement all algorithms. SGD optimizer is used with a 0.01 initial learning rate and 0.9 momentum for all local training. We decay the learning rate exponentially with a factor of 0.99 in each round. We apply $10^{-5}$ weight decay for Cifar 10 and Cifar 100 while $10^{-3}$ for TinyImageNet to prevent overfitting. In each round, all methods locally train for 5 epochs, and the batch size is set to 64. All methods are trained for 200 rounds, after which there is no significant gain over the testing accuracy. All experiments are run a Linux (Ubuntu) server with 4 A100-SXM4-40GB GPUs and 24 Intel(R) Xeon(R) CPUs. 

We developed a unified code base so all baseline methods can be fairly tested in the same environment. For example, since we fix the random seed, it is guaranteed that all baseline methods are tested under exactly the same clients' configuration, including the starting point (if no special steps are performed in the initialization stage), local training dataset $\Dcal_k$, and the precise rounds that are selected to participate training. More details can be found in the source code.

\subsubsection{Model Architecture.} The CNN model used for Cifar10 / Cifar 100 is a 5-layer neural network with 2 Convolutional layers and 3 linear layers. A Pytorch-like code is given in Listing~\ref{lst:cnn}.
\begin{lstlisting}[language=Python, caption=CNN architecture, label={lst:cnn},captionpos=b]
class Conv2Cifar(Model):
    def __init__(self, num_classes):
        super().__init__()
        self.conv1 = nn.Conv2d(in_channels=3, out_channels=64, kernel_size=5)
        self.conv2 = nn.Conv2d(in_channels=64, out_channels=64, kernel_size=5)
        self.pool = nn.MaxPool2d(kernel_size=2, stride=2)
        self.linear1 = nn.Linear(64 * 5 * 5, 384)
        self.linear2 = nn.Linear(384, 192)
        self.linear3 = nn.Linear(192, num_classes, bias=False)

    def forward(self, x):
        x = self.pool(F.relu(self.conv1(x)))
        x = self.pool(F.relu(self.conv2(x)))
        x = x.view(-1, 64 * 5 * 5)
        x = F.relu(self.linear1(x))
        x = F.relu(self.linear2(x))
        logits = self.linear3(x)
        return logits
\end{lstlisting}

For Resnet18, we follow~\cite{lin2020ensemble} to replace batch normalization with group normalization, and we set the number of groups to 2. The rest part of the ResNet18 remains unchanged.

\subsubsection{Choices of Hyper-Parameters and Additional Comments on Implementing Baseline Methods.}\label{appendix:hyperparams}

\begin{itemize}
    \item For the local training method, each client is trained for exactly the same number of rounds as its counterpart in FedAvg. For example, if client 0 is selected at rounds from \{1, 10, 56, 101, 193\}, a total of 5 rounds, then client 0 in the local training setting will be precisely trained for 5 rounds.
    \item For FedPer, one needs to specify which layers are to share and which layers are kept private. As stated in the Section~ \ref{sec:introduction}, we take the last linear layer as the personalized part while sharing the rest.
    \item For Ditto, the penalty parameter $\lambda$ for regularization term is set to 0.75 as chosen in \cite{collins2021exploiting, oh2021fedbabu}. The optimal global model is estimated using the FedAvg as one option suggested in \cite{li2021ditto}.
    \item For FedRep, we follow the official implementation\footnote{https://github.com/lgcollins/FedRep} that we first train the head for 10 epochs with the body fixed, and then train the body for 5 epochs with the head fixed.
    
    \item For FedProto, we mainly follow the official implementation\footnote{https://github.com/yuetan031/fedproto} with some modifications to improve the efficiency. The penalty parameter $\lambda$ is set to 0.1 as is suggested in \cite{tan2022fedproto}.
    \item For CReFF, we follow the official implementation \footnote{https://github.com/shangxinyi/CReFF-FL}. The number of federated features is set to 100 per class. The number of epochs of optimizing federated features is set to 100, while the number of epochs to re-train the classifier is set to 300 for each round. We remark that the learning rate for federated features is suggested to be 0.1, and the learning rate for the classifier in the server is 0.01 in the official implementation. However, in the cross-device setting, such a choice makes the training unstable, and the global model fails to converge. We conjecture the cause is the incompatible learning rate for the classifier (0.01) and the federated features (0.1). So we set both values of the learning rate to 0.01.
    \item For FedBABU, our implementation reference from the official implementation\footnote{https://github.com/jhoon-oh/FedBABU}. The head is initialized using Pytorch's default implementation\footnote{https://github.com/pytorch/pytorch/blob/684a404defaaa9e55ac905a03d9f6a368f3c45f7/torch/nn/modules/linear.py\#L67}. After finishing the training, the local model for each client is fine-tuned for 5 epochs with the head also being updated.
    \item For FedROD, there is no official implementation. Therefore, our implementation is based on the partial code released by the authors on the OpenReview\footnote{https://openreview.net/revisions?id=I1hQbx10Kxn}. It is worth mentioning that during the local training, the global head and the body is trained first for 5 epochs. And then, the personalized head is trained with the body fixed. This training strategy implies that FedROD has more computation cost compared with methods like FedBABU and FedPer. Lastly, for the CNN model, we use the Hyper-network to generate the personalized head while not for the ResNet18. This is due to the GPU memory constraints.
    \item For FedNH, we use $\rho=0.9$ for all tests. Additionally, for the scaling parameter $s$ appeared in Eq.~\eqref{prob:local-prob}, instead of fixing it to a constant, we also make it trainable for more flexibility. The aggregation weight 
    is set as $\alpha_k^t=\frac{1}{|\Scal^t|}$.
\end{itemize}

\subsubsection{Setups for the Spiral example.}
For the results reported in Figure~\ref{fig:b_fh}- Figure~\ref{fig:ub_uh}, we use SGD as the optimizer with 0.1 initial learning rate, 0.9 momentum, and $10^{-5}$ weight decay. We decay the learning rate by 0.99 for every epoch. For the results reported in Figure~\ref{fig:consistent.goal}. We use Dir(0.3) to distribute the balanced spiral dataset to 100 clients to simulate the data heterogeneity and class imbalance. We FedAvg as the learning algorithm.   The client participation rate is 0.1.
Again, we use SGD as the local optimizer with 0.1 initial learning rate, 0.9 momentum, and $10^{-5}$ weight decay. For each round, the selected clients perform 5 epochs of local training. We decay the learning rate by 0.99 for every round.

\subsection{Discussions on testing accuracy}\label{appendix:accuracy-discussions}
One can see that the local models from FedAvg serves as strong personalized models. Similar phenomenon is also reported in \cite{collins2021exploiting,chen2021bridging, oh2021fedbabu}. The local model can be regarded as the result of fine-tuning global, which earns more benefits from federation \cite{oh2021fedbabu}.

 Next, we discuss each baseline mehtod. For FedPer, it assumes the availability of all clients. In the cross-device setting, client sampling might cause performance degradation. For Ditto, the global model is derived the same way as the FedAvg. However, the personalized model is worse than that of FedAvg, which suggests that choosing a proper penalty parameter $\lambda$ could make a big difference.
 The performance drop of FedRep could be attributed to the low epochs of training on the body\cite{oh2021fedbabu}. For FedProto, it assumes the full participation of clients. Therefore, when in the cross-device setting, prototypes proposed by a small set of clients may not capture the global view of the training dataset and may vary rapidly from round to round. Moreover, each round may select clients with drastically different data distributions, hence making the prototypes obtained from the last round less informative to the currently selected clients. For CReFF, the hype-parameters used in \cite{shang2022federated} make the training dynamics unstable, i.e., the global model may not converge in the cross-device setting. After tuning the learning rate for the federated features and the balanced classifier, the global model indeed converges but with the less strong performance as indicated in \cite{shang2022federated} when there are only 20 clients with 40\% participation ratio. 
 
\subsection{Some missing visualization}\label{appendix:miss.vis}
First, the balanced and unbalanced training dataset together with the testing datasets used in Section~\ref{sec:motivating.example} are shown in Figure~\ref{fig:spiral.appendix}. 
\begin{figure}[!h]
     \centering
     \begin{subfigure}[b]{0.33\textwidth}
         \centering
         \includegraphics[width=\textwidth]{./figs/train_spiral_balanced.pdf}
         \caption{Balanced spiral training set}
         \label{fig:appendix-balan-spiral-train}
     \end{subfigure}
     \hfill
     \begin{subfigure}[b]{0.33\textwidth}
         \centering
         \includegraphics[width=\textwidth]{./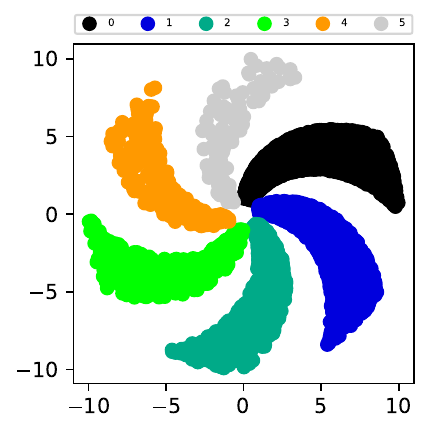}
         \caption{Imbalanced spiral training set}
         \label{fig:appendix-imbalan-spiral-train}
     \end{subfigure}
    \begin{subfigure}[b]{0.33\textwidth}
         \centering
         \includegraphics[width=\textwidth]{./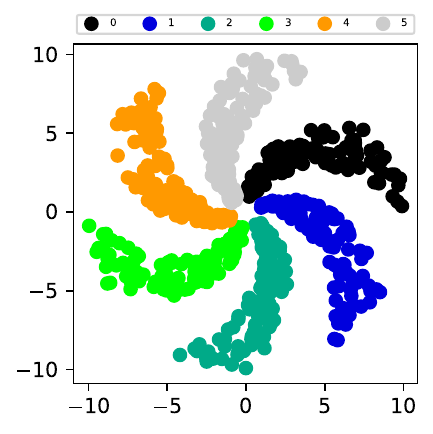}
         \caption{Balanced spiral testing set}
         \label{fig:appendix-spiral-testing}
     \end{subfigure}
    \caption{Visualization of Spiral datasets.}
    \label{fig:spiral.appendix}
\end{figure}

Next, we visualize latent representations learned across different clients as discussed in Section~\ref{sec:exp.analysis} in Figure~\ref{fig:local.cifar100}, where we add a plot to show the class distributions of two clients. Specifically, we choose to compare the representations of three classes \{Oka tree, Leopard, Bear\}. We use PCA to perform dimension reduction and only visualize the first two dimensions. As shown in Figure~\ref{fig:fedavgc0}-Figure~\ref{fig:fednhc1}, the representations learned by FedNH capture more semantics, as those of Leopard and Bear are more close than those of Oka tree. Moreover, the representations of Leopard and Bear are learned more consistently since they occupy the same locations in the latent space across client 0 and client 1. On the contrary, the representations of Leopard and Bear are inconsistent across client 0 and client 1 of FedAvg as they switch locations.
\begin{figure}[!ht]
     \centering
     \begin{subfigure}[b]{0.23\textwidth}
         \centering
         \includegraphics[width=4.5cm]{./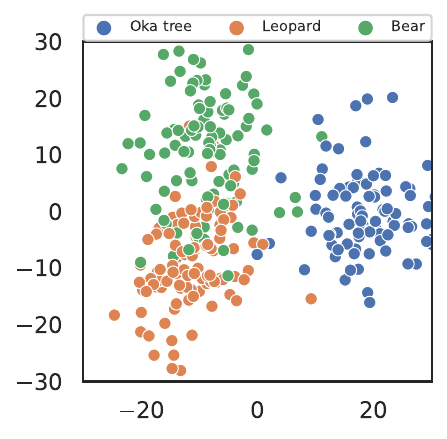}
         \caption{Client 0 of \fedavg{} }
         \label{fig:fedavgc0}
     \end{subfigure}
     \hfill
     \begin{subfigure}[b]{0.23\textwidth}
         \centering
         \includegraphics[width=4.5cm]{./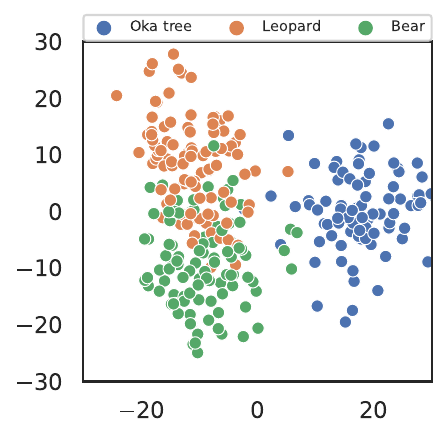}
         \caption{ Client 1 of \fedavg{}}
         \label{fig:fedavgc1}
     \end{subfigure}
    \hfill
    \begin{subfigure}[b]{0.23\textwidth}
         \centering
         \includegraphics[height=4.5cm]{./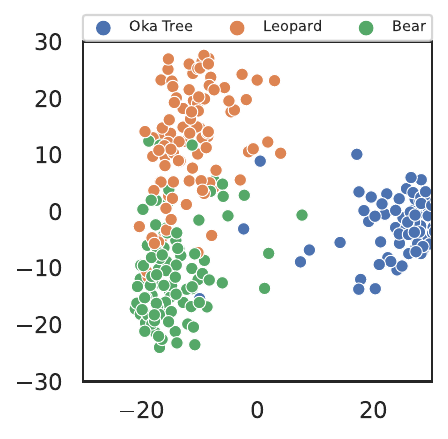}
         \caption{Client 0 of FedNH }
         \label{fig:fednhc0}
    \end{subfigure}  
    \hfill
     \begin{subfigure}[b]{0.23\textwidth}
         \centering
         \includegraphics[height=4.5cm]{./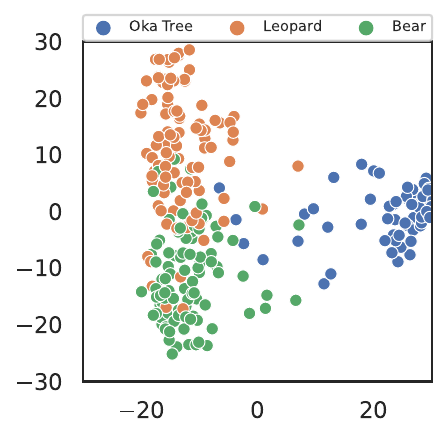}
         \caption{Client 1 of FedNH }
         \label{fig:fednhc1}
     \end{subfigure}     
    \hfill
     \begin{subfigure}[b]{\textwidth}
         \centering
         \includegraphics[height=5cm]{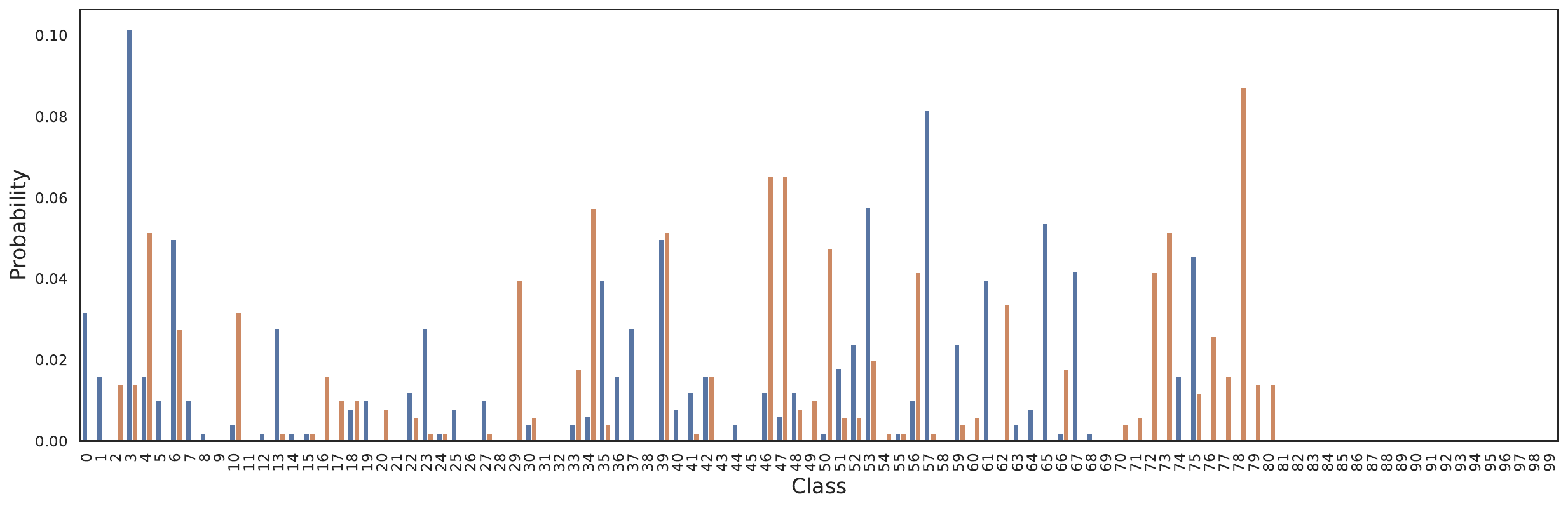}
         \caption{Class distribution for Client 0 and Client 1.}
         \label{fig:client_class_dist. (blue:client 0; orage: client 1)}
     \end{subfigure}         
        \caption{Visualization of local representations from three classes}
        \label{fig:local.cifar100}
\end{figure}

\clearpage
Next, we provide a more detailed analysis of prototypes captured by FedNH in Figure~\ref{fig:semantics-fednh}. To further validate that prototypes learned by FedNH can capture the class semantics, we visualize the pairwise class prototypes' similarity from Cifar100 in Figure~\ref{fig:sim_fednh_avg}. The 100 classes in Cifar100 form 20 super-classes. In Figure~\ref{fig:sim_fednh_avg}, we group classes from one super-class into one red box along the main diagonal. Because the classes within one red box are semantically closer, their prototypes should also be more similar. We can see that our FedNH learns the semantics of classes by capturing their fine-grained similarities. Next, we investigate some blocks off the main diagonal, as highlighted in green boxes. For example, the first green block capture the similarity between \{beaver, dolphin, otter, seal, whale\} and \{aquarium fish, flatfish, ray, shark, trout\}. We can see some are of high correlation, which indeed makes sense. On the contrary, the third green box captures the similarity among super-class
\{bee, beetle, butterfly, caterpillar, cockroach\} and
\{bridge, castle, house, road, skyscraper\}. And clearly, they are less similar. Hence correlations are low.

\begin{figure}[!th]
    \centering
    \includegraphics[scale=0.4]{./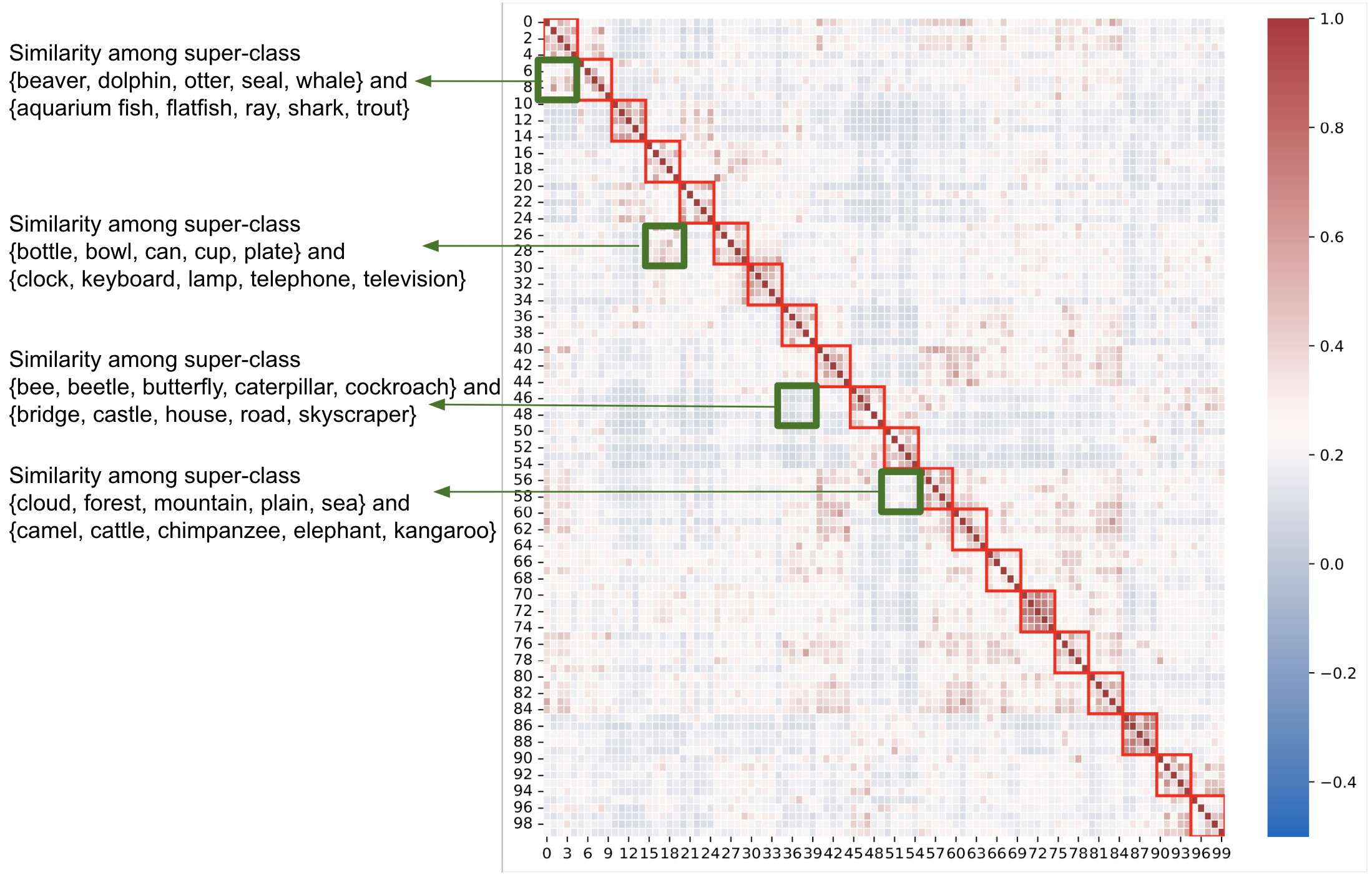}
    \caption{Fine-grained class semantics captured by FedNH.}
    \label{fig:semantics-fednh}
\end{figure}

\clearpage
Lastly, we present a set of prototypes' similarity for different baseline methods in Figure~\ref{fig:cifar100.sim}.
\begin{figure}[!h]
     \centering
     \begin{subfigure}[b]{0.33\textwidth}
         \centering
         \includegraphics[width=6cm]{./figs/fednh_cifar100_similarity.pdf}
         \caption{FedNH}
     \end{subfigure}
     \hfill
     \begin{subfigure}[b]{0.33\textwidth}
         \centering
         \includegraphics[width=6cm]{./figs/fedavg_cifar100_similarity.pdf}
         \caption{FedAvg}
     \end{subfigure}
     \hfill
     \begin{subfigure}[b]{0.33\textwidth}
         \centering
         \includegraphics[width=6cm]{./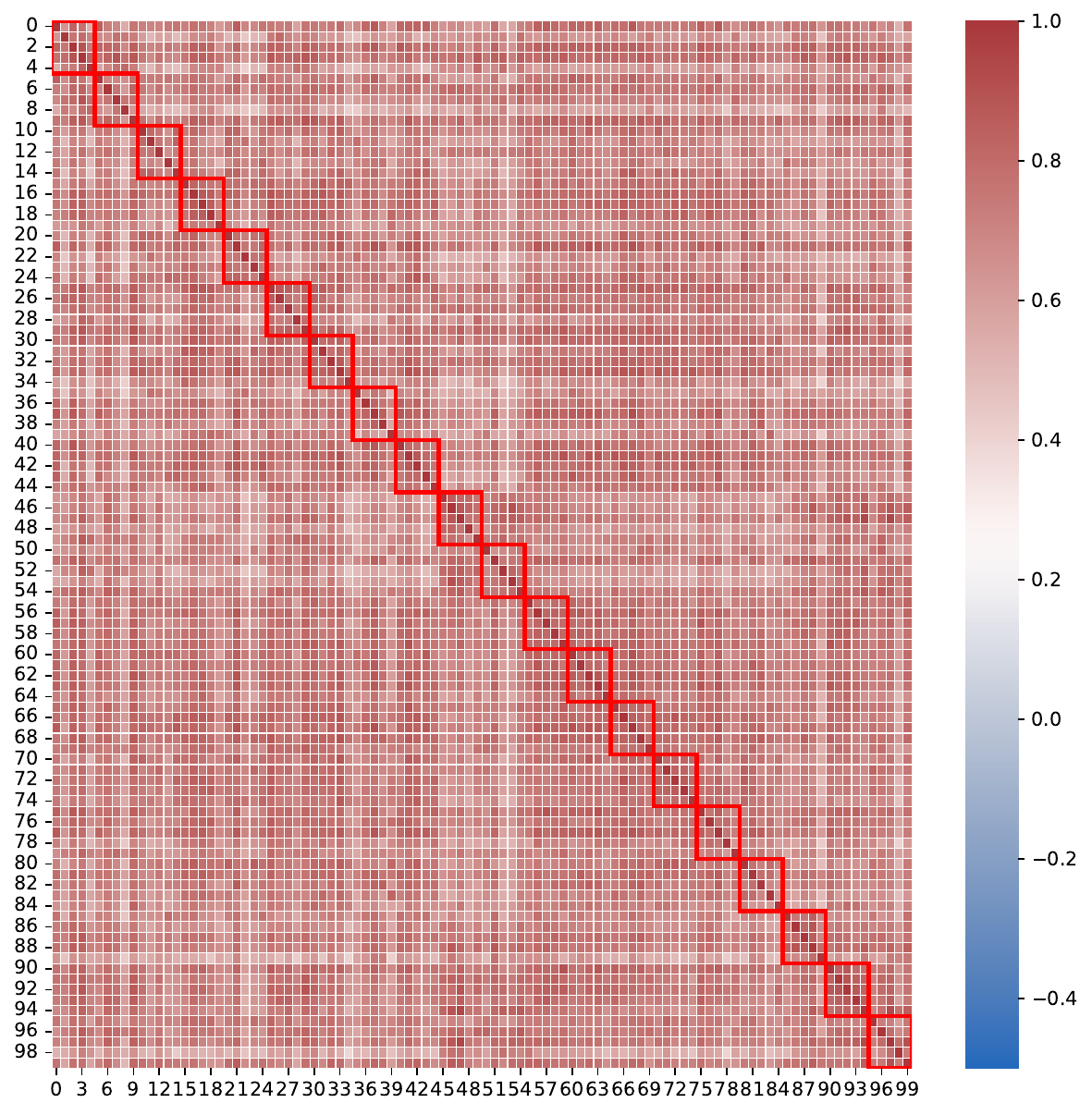}
         \caption{FedProto}
     \end{subfigure}
      \hfill
     \begin{subfigure}[b]{0.33\textwidth}
         \centering
         \includegraphics[width=6cm]{./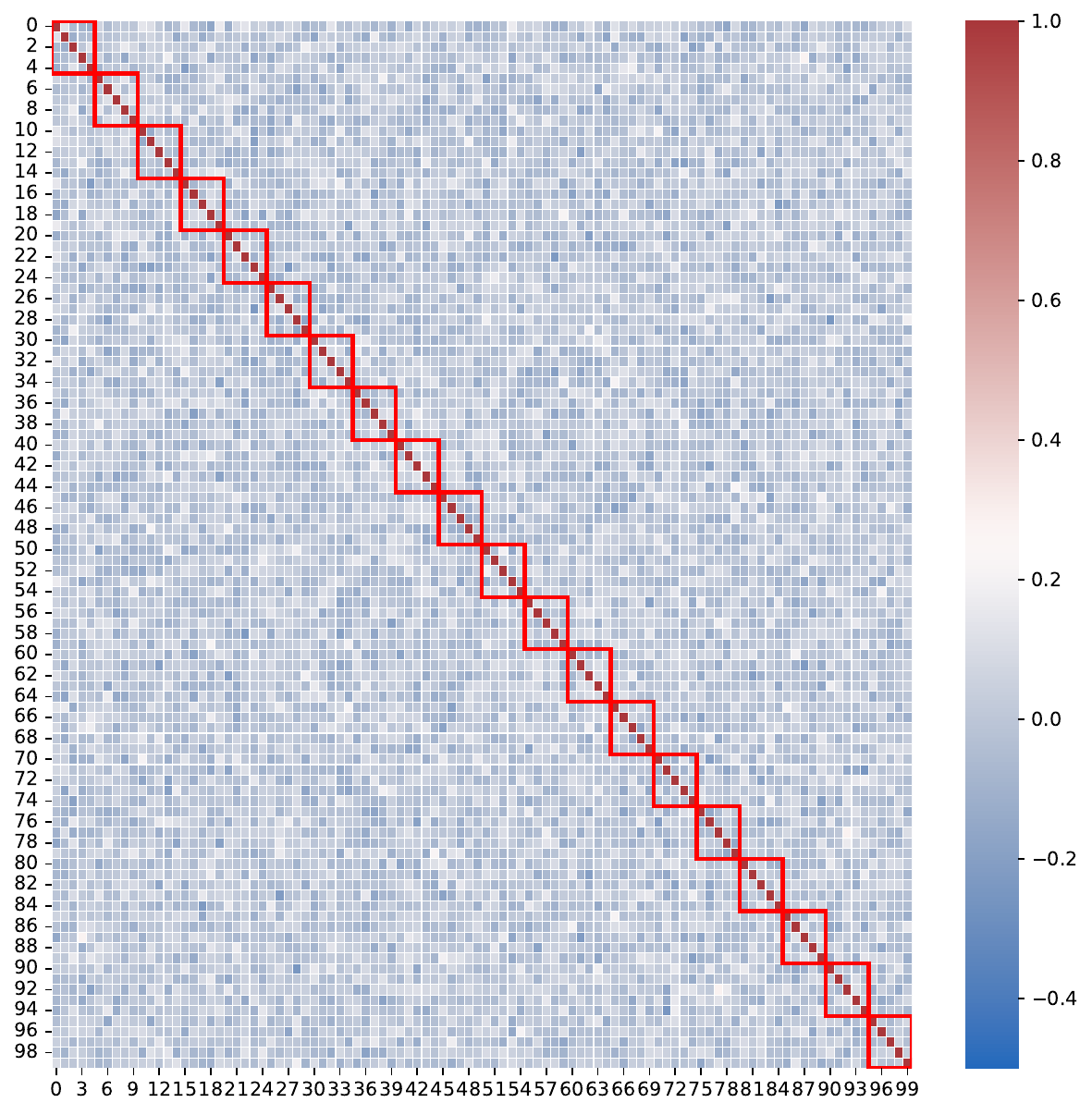}
         \caption{FedBABU}
     \end{subfigure}
           \hfill
     \begin{subfigure}[b]{0.33\textwidth}
         \centering
         \includegraphics[width=6cm]{./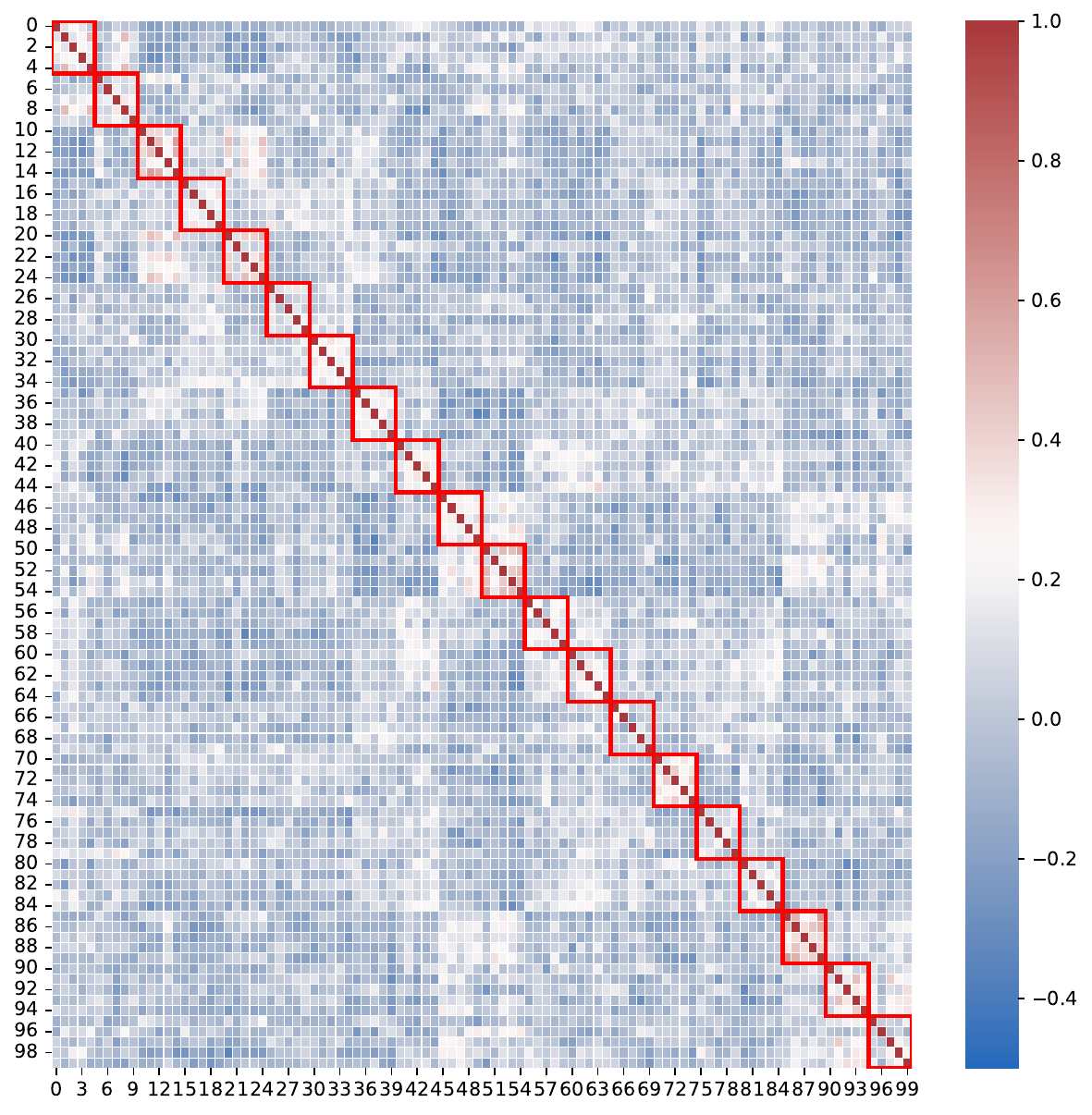}
         \caption{FedROD}
     \end{subfigure}
          \hfill
     \begin{subfigure}[b]{0.33\textwidth}
         \centering
         \includegraphics[width=6cm]{./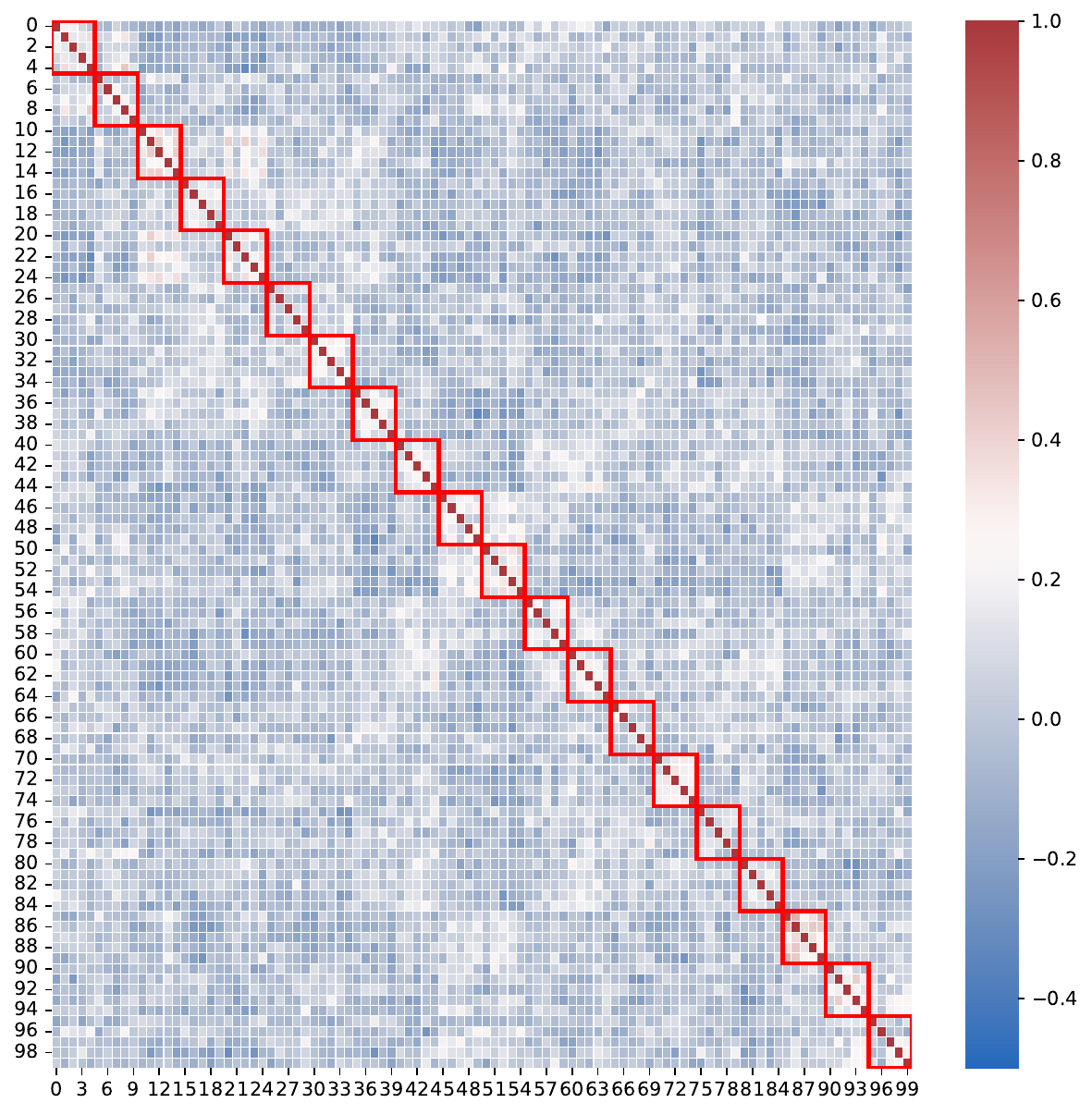}
         \caption{Ditto}
     \end{subfigure}
          \hfill
     \begin{subfigure}[b]{0.33\textwidth}
         \centering
         \includegraphics[width=6cm]{./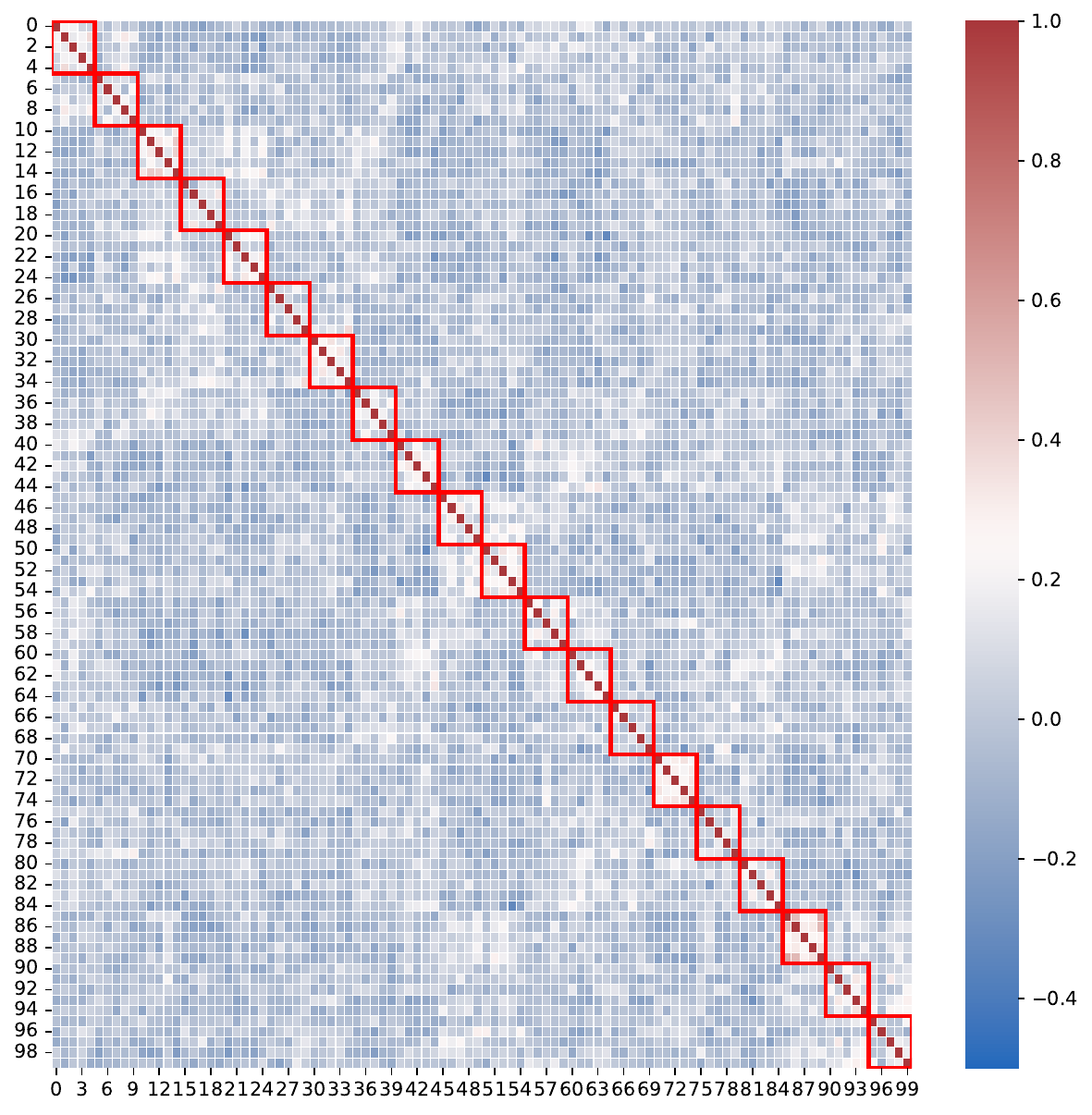}
         \caption{CReFF}
     \end{subfigure}
     \hfill
     \begin{subfigure}[b]{0.33\textwidth}
         \centering
         \includegraphics[width=6cm]{./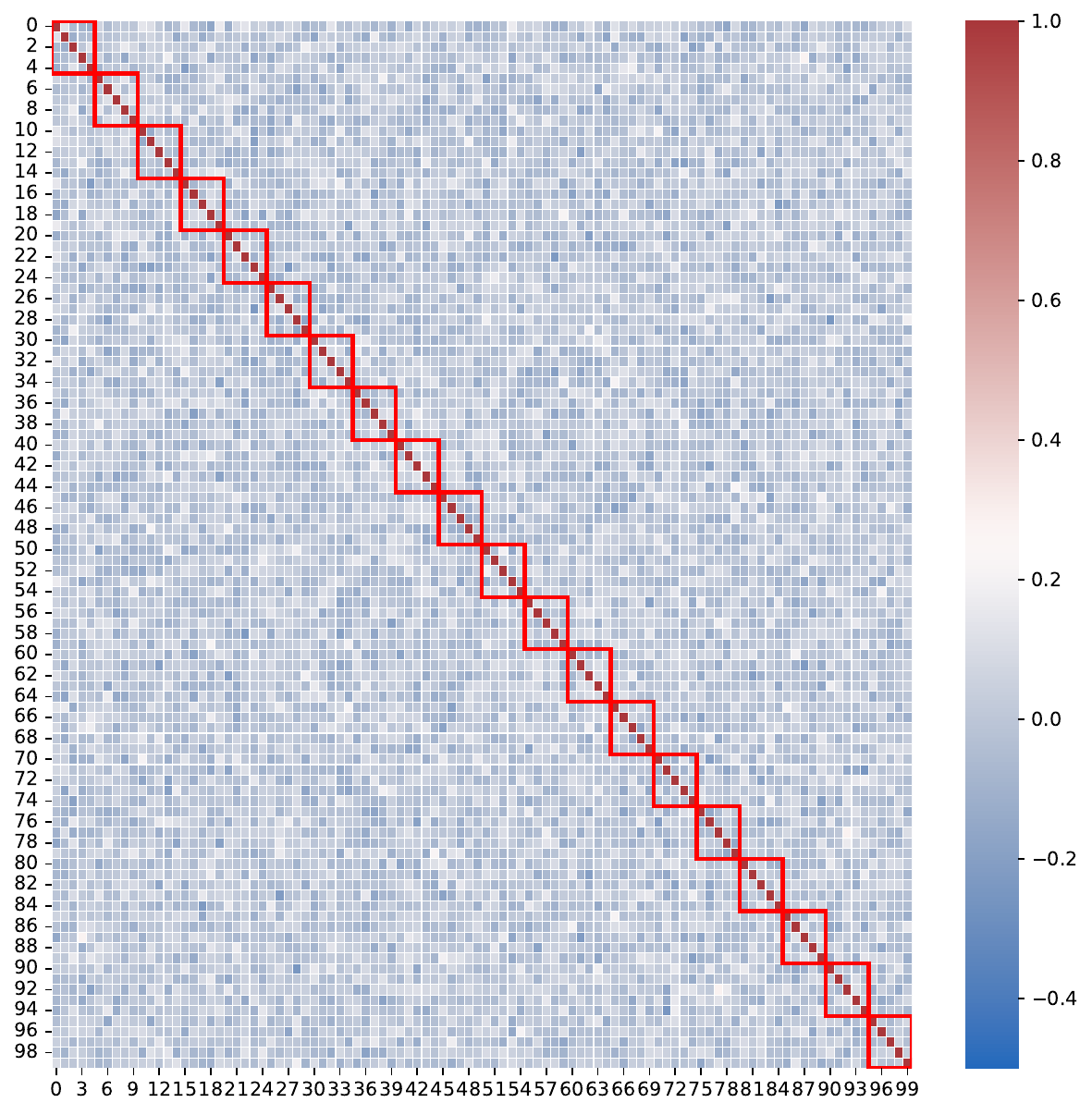}
         \caption{FedPer}
     \end{subfigure}     
     \hfill
     \begin{subfigure}[b]{0.33\textwidth}
         \centering
         \includegraphics[width=6cm]{./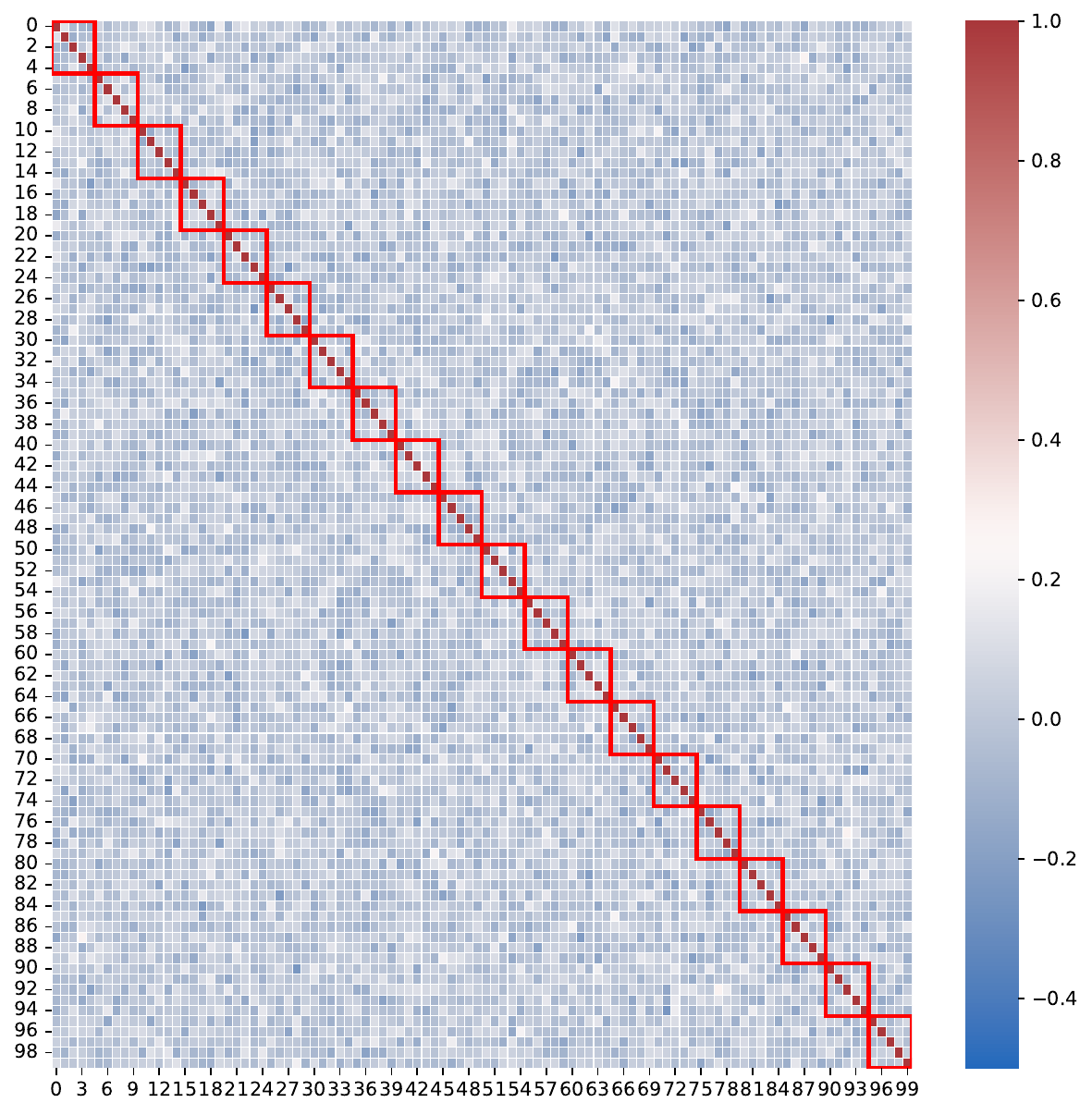}
         \caption{FedRep}
     \end{subfigure}     
    \caption{Class prototypes' pair-wise similarity.}
    \label{fig:cifar100.sim}
\end{figure}

\clearpage
\subsection{Increasing the participate ratio}
We also report the result by increasing the participation ratio of clients from 0.1 to 0.3 in Table~\ref{tab:pratio-0.3}. As one can see, the accuracy in GM, PM(V), and PM(I) will slightly improve.

\begin{table}[!ht]
\begin{tabular}{@{}clcccccc@{}}
\toprule
Dataset                                            & \multicolumn{1}{|c|}{Method}    & \multicolumn{3}{c|}{Dir(0.3)}                                                               & \multicolumn{3}{c}{Dir(1.0)}                                          \\ \midrule
\multicolumn{1}{c|}{}                              & \multicolumn{1}{c|}{}         & GM                    & PM(V)                 & \multicolumn{1}{c|}{PM(L)}                 & GM                    & PM(V)                 & PM(L)                 \\ \midrule
\multicolumn{1}{c|}{\multirow{9}{*}{Cifar10}}      & \multicolumn{1}{l|}{FedAvg}   & 71.16 ± 0.81          & 64.57 ± 2.45          & \multicolumn{1}{c|}{\underline{84.74 ± 1.82}} & 74.91 ± 0.57          & 68.87 ± 1.27          & 79.46 ± 1.03          \\
\multicolumn{1}{c|}{}                              & \multicolumn{1}{l|}{FedPer}   & 57.54 ± 0.29          & 57.20 ± 1.33          & \multicolumn{1}{c|}{80.73 ± 2.31}          & 60.14 ± 0.18          & 56.00 ± 1.23          & 69.65 ± 2.11          \\
\multicolumn{1}{c|}{}                              & \multicolumn{1}{l|}{Ditto}    & 71.16 ± 0.81          & 61.07 ± 2.34          & \multicolumn{1}{c|}{82.88 ± 1.50}          & 74.91 ± 0.57          & 66.23 ± 2.17          & 77.69 ± 1.36          \\
\multicolumn{1}{c|}{}                              & \multicolumn{1}{l|}{FedRep}   & 44.08 ± 0.09          & 55.71 ± 1.11          & \multicolumn{1}{c|}{79.75 ± 2.17}          & 48.11 ± 0.07          & 54.39 ± 1.77          & 68.43 ± 2.16          \\
\multicolumn{1}{c|}{}                              & \multicolumn{1}{l|}{FedProto} & ---                   & 46.08 ± 2.31          & \multicolumn{1}{c|}{71.87 ± 2.45}          & ---                   & 43.19 ± 2.27          & 57.94 ± 1.27          \\
\multicolumn{1}{c|}{}                              & \multicolumn{1}{l|}{CReFF}    & 69.13 ± 0.46          & 64.57 ± 2.45          & \multicolumn{1}{c|}{\underline{84.74 ± 1.82}} & 72.85 ± 0.27          & 68.87 ± 1.27          & 79.46 ± 1.03          \\
\multicolumn{1}{c|}{}                              & \multicolumn{1}{l|}{FedBABU}  & 67.65 ± 0.77          & 61.76 ± 2.16          & \multicolumn{1}{c|}{83.45 ± 1.03}          & 72.84 ± 0.72          & 66.81 ± 1.44          & 78.19 ± 1.80          \\
\multicolumn{1}{c|}{}                              & \multicolumn{1}{l|}{FedROD}   & \textbf{73.41 ± 0.13} & \textbf{66.47 ± 1.27} & \multicolumn{1}{c|}{83.80 ± 1.03}          & \underline{76.10 ± 0.08} & \underline{69.77 ± 1.98} & \underline{78.32 ± 1.76} \\
\multicolumn{1}{c|}{}                              & \multicolumn{1}{l|}{FedNH}    & \underline{72.69 ± 0.43} & \underline{66.35 ± 1.23} & \multicolumn{1}{c|}{\textbf{85.21 ± 2.11}} & \textbf{77.22 ± 0.29} & \textbf{71.04 ± 1.15} & \textbf{80.50 ± 2.14} \\ \midrule
\multicolumn{1}{c|}{\multirow{9}{*}{Cifar 100}}    & \multicolumn{1}{l|}{FedAvg}   & 36.75 ± 0.29          & \underline{32.30 ± 2.45} & \multicolumn{1}{c|}{\underline{50.89 ± 1.82}} & \underline{37.12 ± 0.11} & \underline{28.52 ± 1.27} & \underline{37.83 ± 1.03} \\
\multicolumn{1}{c|}{}                              & \multicolumn{1}{l|}{FedPer}   & 10.29 ± 0.05          & 14.35 ± 1.33          & \multicolumn{1}{c|}{31.04 ± 2.31}          & 10.81 ± 0.02          & 10.23 ± 1.23          & 17.41 ± 2.11          \\
\multicolumn{1}{c|}{}                              & \multicolumn{1}{l|}{Ditto}    & 36.75 ± 0.29          & 29.38 ± 2.34          & \multicolumn{1}{c|}{48.06 ± 1.50}          & 37.12 ± 0.11          & 25.34 ± 2.17          & 34.87 ± 1.36          \\
\multicolumn{1}{c|}{}                              & \multicolumn{1}{l|}{FedRep}   & 4.61 ± 0.01           & 13.22 ± 1.11          & \multicolumn{1}{c|}{29.33 ± 2.17}          & 5.58 ± 0.02           & 9.29 ± 1.77           & 15.98 ± 2.16          \\
\multicolumn{1}{c|}{}                              & \multicolumn{1}{l|}{FedProto} & ---                   & 14.56 ± 2.31          & \multicolumn{1}{c|}{28.25 ± 2.45}          & ---                   & 10.98 ± 2.27          & 16.40 ± 1.27          \\
\multicolumn{1}{c|}{}                              & \multicolumn{1}{l|}{CReFF}    & 17.26 ± 0.12          & \underline{32.30 ± 2.45} & \multicolumn{1}{c|}{\underline{50.89 ± 1.82}} & 18.79 ± 0.13          & \underline{28.52 ± 1.27} & \underline{37.83 ± 1.03} \\
\multicolumn{1}{c|}{}                              & \multicolumn{1}{l|}{FedBABU}  & 34.29 ± 0.25          & 29.05 ± 2.16          & \multicolumn{1}{c|}{47.63 ± 1.03}          & 33.86 ± 0.28          & 25.83 ± 1.44          & 34.73 ± 1.80          \\
\multicolumn{1}{c|}{}                              & \multicolumn{1}{l|}{FedROD}   & 34.17 ± 0.15          & 28.20 ± 1.27          & \multicolumn{1}{c|}{42.82 ± 1.03}          & 35.05 ± 0.10          & 26.92 ± 1.98          & 32.94 ± 1.76          \\
\multicolumn{1}{c|}{}                              & \multicolumn{1}{l|}{FedNH}    & \textbf{42.13 ± 0.17} & \textbf{38.29 ± 1.23} & \multicolumn{1}{c|}{\textbf{55.19 ± 2.11}} & \textbf{43.85 ± 0.14} & \textbf{37.02 ± 1.15} & \textbf{45.45 ± 2.14} \\ \midrule
\multicolumn{1}{c|}{\multirow{9}{*}{TinyImageNet}} & \multicolumn{1}{l|}{FedAvg}   & 35.94 ± 0.27          & 28.24 ± 2.45          & \multicolumn{1}{c|}{45.27 ± 1.82}          & 38.80 ± 0.20          & \underline{29.51 ± 1.27} & \underline{37.57 ± 1.03} \\
\multicolumn{1}{c|}{}                              & \multicolumn{1}{l|}{FedPer}   & 8.26 ± 0.06           & 11.34 ± 1.33          & \multicolumn{1}{c|}{27.15 ± 2.31}          & 8.84 ± 0.06           & 7.73 ± 1.23           & 14.17 ± 2.11          \\
\multicolumn{1}{c|}{}                              & \multicolumn{1}{l|}{Ditto}    & 35.94 ± 0.27          & 25.96 ± 2.34          & \multicolumn{1}{c|}{44.05 ± 1.50}          & 38.80 ± 0.20          & 27.72 ± 2.17          & 36.76 ± 1.36          \\
\multicolumn{1}{c|}{}                              & \multicolumn{1}{l|}{FedRep}   & 2.09 ± 0.02           & 8.80 ± 1.11           & \multicolumn{1}{c|}{22.20 ± 2.17}          & 2.43 ± 0.02           & 5.33 ± 1.77           & 10.24 ± 2.16          \\
\multicolumn{1}{c|}{}                              & \multicolumn{1}{l|}{FedProto} & ---                   & 8.28 ± 2.31           & \multicolumn{1}{c|}{19.18 ± 2.45}          & ---                   & 6.05 ± 2.27           & 10.22 ± 1.27          \\
\multicolumn{1}{c|}{}                              & \multicolumn{1}{l|}{CReFF}    & 28.43 ± 0.47          & 28.24 ± 2.45          & \multicolumn{1}{c|}{45.27 ± 1.82}          & 31.15 ± 0.19          & \underline{29.51 ± 1.27} & \underline{37.57 ± 1.03} \\
\multicolumn{1}{c|}{}                              & \multicolumn{1}{l|}{FedBABU}  & 29.12 ± 0.43          & 22.63 ± 2.16          & \multicolumn{1}{c|}{39.77 ± 1.03}          & 31.43 ± 0.12          & 23.50 ± 1.44          & 31.73 ± 1.80          \\
\multicolumn{1}{c|}{}                              & \multicolumn{1}{l|}{FedROD}   & \underline{37.20 ± 0.25} & \underline{28.03 ± 1.27} & \multicolumn{1}{c|}{\underline{45.11 ± 1.03}} & \underline{38.81 ± 0.19} & 27.43 ± 1.98          & 36.45 ± 1.76          \\
\multicolumn{1}{c|}{}                              & \multicolumn{1}{l|}{FedNH}    & \textbf{38.15 ± 0.17} & \textbf{31.57 ± 1.23} & \multicolumn{1}{c|}{\textbf{46.53 ± 2.11}} & \textbf{39.02 ± 0.15} & \textbf{30.76 ± 1.15} & \textbf{37.97 ± 2.14} \\ \bottomrule
\end{tabular}
\caption{Comparison of testing accuracy with participation ratio being 0.3.
The best results are in bold font, while the second best results are underlined. The lines ``---'' represent results are not available. The numbers (mean ± std) are the average of three independent runs.}
\label{tab:pratio-0.3}
\end{table}

\subsection{Discussion on computation costs}
It can be seen that FedNH adds minimum computation overhead compared with the state-of-the-art methods, for example, FedROD\cite{chen2021bridging}, which needs to train two classification heads (or even a sub-network to generate one of the classification heads). Instead, FedNH method only requires additional computation on prototypes' updates \eqref{eq:local-proto-update} and \eqref{eq:prototype-update}.

\clearpage
\section{Convergence Analysis}\label{appendix:complated-convergence-analysis}
In this section, we present the complete convergence analysis for Algorithm~\ref{alg:FedNH} under the full participation assumption. 

\begin{lemma}\label{lemma:bounded-full-grad}
For any $k$ and $W$,  $\norm{\grad_\theta F_k(\theta;W)} \leq M_G$
\end{lemma}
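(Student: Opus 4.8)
The plan is to derive the bound directly from Assumptions~\ref{ass:first-second-momentum} and~\ref{ass:bounded-gradient}, exploiting the convexity of the norm. First I would invoke the unbiasedness part of Assumption~\ref{ass:first-second-momentum}, namely $\grad_\theta F_k(\theta;W) = \Embb[G_k(\theta;W)]$, to rewrite the deterministic full gradient as the expectation of the stochastic gradient estimator. This reduces bounding $\norm{\grad_\theta F_k(\theta;W)}$ to controlling the expected magnitude of the estimator.

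Next I would apply Jensen's inequality to the convex map $v\mapsto\norm{v}$ and then close with the bounded-gradient assumption:
$$
\norm{\grad_\theta F_k(\theta;W)} = \norm{\Embb[G_k(\theta;W)]} \leq \Embb\left[\norm{G_k(\theta;W)}\right] \leq M_G,
$$
where the last step uses Assumption~\ref{ass:bounded-gradient}: since $\big(\Embb[\norm{G_k(\theta;W)}]\big)^2 \leq M_G^2$ with both sides nonnegative, taking square roots gives $\Embb[\norm{G_k(\theta;W)}] \leq M_G$. This is the full argument.

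There is essentially no hard step here; the result is a one-line consequence of combining the two assumptions, and I expect the ``obstacle'' to be purely bookkeeping. The two points requiring care are: interpreting the generic estimator $G$ appearing in Assumption~\ref{ass:bounded-gradient} as $G_k$ for the relevant client, and noting that the bound is stated for $(\theta,W)$ lying in the bounded set $\Bcal$, so the lemma is understood to hold on that domain (which is where the iterates are assumed to remain). I would also remark that the precise placement of the square in Assumption~\ref{ass:bounded-gradient}---outside the expectation---makes the final inequality immediate; had the assumption instead controlled $\Embb[\norm{G_k}^2]$, the same conclusion would still follow by applying Jensen to the convex map $v\mapsto\norm{v}^2$.
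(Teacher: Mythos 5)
Your proof is correct and is essentially the paper's own argument: both invoke the unbiasedness in Assumption~\ref{ass:first-second-momentum} to write $\grad_\theta F_k(\theta;W)=\Embb[G_k(\theta;W)]$ and then close with Jensen's inequality and Assumption~\ref{ass:bounded-gradient}. The only cosmetic difference is that the paper passes through $\sqrt{\Embb\norm{G_k(\theta;W)}^2}$ (reading the square in Assumption~\ref{ass:bounded-gradient} as inside the expectation) whereas you bound via $\Embb[\norm{G_k(\theta;W)}]$ (the literal reading), and your closing remark already covers both interpretations.
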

\begin{proof}
It follows from the Assumption~\ref{ass:first-second-momentum} and \ref{ass:bounded-gradient} and the Jensen's inequality that
$$
  \norm{\grad_\theta F_k(\theta;W)} = \norm{\Embb[G_k(\theta;W)]}\leq \sqrt{\Embb\norm{G_k(\theta;W)}^2} \leq M_G,  
$$
\end{proof}

\begin{lemma}\label{lemma:bound-grad}
For $W\in\R{|\Ccal|\times d_2}$. Define function 
$$
h(W):=\frac{1}{n} \sum_{i=1}^{n}-\log\frac{\exp(W_{y_i}f(\theta;x_i,y_i))}{\sum_{j=1}^{|\Ccal|}\exp(W_jf(\theta;x_i,y_i))},
$$
where $y_i\in[c]$ for all $i$, $W_{y_i}\in\R{d_2}$ is a row vector, and $f(\theta;x_i,y_i)\in\R{d_2}$ is a column vector. Then
$\norm{\grad h(W)}\leq |C|$.
\end{lemma}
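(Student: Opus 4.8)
The plan is to recognize $h$ as the standard softmax cross-entropy loss and to compute its gradient with respect to $W$ row by row, then to bound the result using only two facts: the softmax outputs lie in $[0,1]$, and the normalization layer forces $\norm{f(\theta;x_i,y_i)}=1$. Writing $z_i := f(\theta;x_i,y_i)\in\R{d_2}$ and introducing the softmax probabilities
$$
p_{i,j} := \frac{\exp(W_j z_i)}{\sum_{l=1}^{|\Ccal|}\exp(W_l z_i)},
$$
I would first record that $\norm{z_i}=1$ for every $i$, since this is the only structural input the bound relies on.

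Next I would differentiate $h$ with respect to the $j$th row $W_j$. A direct computation of the log-softmax term gives the well-known identity
$$
\grad_{W_j} h(W) = \frac{1}{n}\sum_{i=1}^{n}\bigl(p_{i,j}-\mathbf{1}(j=y_i)\bigr)\, z_i^\top,
$$
where the indicator accounts for the numerator $W_{y_i}z_i$ contributing only to the row $y_i$. The main (and essentially only) obstacle is carrying out this derivative carefully, keeping track of the diagonal term $j=y_i$ against the off-diagonal terms $j\neq y_i$; once this identity is in hand the remaining estimates are routine.

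Finally I would bound the full gradient. Since $p_{i,j}\in[0,1]$ we have $\lvert p_{i,j}-\mathbf{1}(j=y_i)\rvert\leq 1$, and combining this with $\norm{z_i}=1$ the triangle inequality yields $\norm{\grad_{W_j}h(W)}\leq \tfrac{1}{n}\sum_{i=1}^n \lvert p_{i,j}-\mathbf{1}(j=y_i)\rvert\,\norm{z_i}\leq 1$ for each $j$. Viewing $\grad h(W)$ as the stack of its $|\Ccal|$ rows and applying the triangle inequality over the rows (equivalently bounding the Frobenius norm by the sum of row norms, which in turn dominates the spectral norm), I obtain
$$
\norm{\grad h(W)} \leq \sum_{j=1}^{|\Ccal|}\norm{\grad_{W_j}h(W)} \leq |\Ccal|,
$$
which is exactly the claimed bound.
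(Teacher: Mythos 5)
Your proof is correct and follows essentially the same route as the paper's: both compute the row-wise gradient $\grad_{W_j}h(W)=\frac{1}{n}\sum_i\bigl(p_{i,j}-\mathbf{1}(j=y_i)\bigr)z_i^\top$ (the paper just writes this via an explicit quotient-rule expansion before simplifying to the same expression), bound each row's norm by $1$ using $p_{i,j}\in[0,1]$ and $\norm{z_i}=1$, and sum over the $|\Ccal|$ rows. Your presentation is in fact slightly cleaner, since you invoke the standard log-softmax gradient identity directly and make the row-stacking/norm-comparison step explicit, which the paper leaves implicit.
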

\begin{proof}
Denote $A=\sum_{j=1}^{|\Ccal|}\exp(W_jf(\theta;x_i,y_i)^)\in\R{+}$ and $\phi_i=f(\theta;x_i,y_i)\in\R{d_2}$. By the construction of the algorithm, i.e., the latent representation $\phi_i$ is normalized, then $\norm{\phi_i}=1$. For any $c\in[C]$,
$$
\begin{aligned}
\frac{\partial h(W)}{\partial W_c} 
&= -\frac{1}{n}\sum_{\{i:y_i=c\}}\frac{A}{\exp(W_{c}\phi_i)}\left(\frac{\exp(W_{c}\phi_i)A-\exp(W_{c}\phi_i)^2}{A^2}\phi_i\right)  + \frac{1}{n}\sum_{\{i:y_i\neq c\}} \frac{A}{\exp(W_{y_i}\phi_i)}\frac{\exp(W_{c}\phi_i)\exp(W_{y_i}\phi_i)}{A^2}\phi_i\\
& = -\frac{1}{n}\left(\sum_{\{i:y_i=c\}}\left(1-\frac{\exp(W_{c}\phi_i)}{A}\right)\phi_i - \sum_{\{i:y_i\neq c\}}\frac{\exp(W_{c}\phi_i)}{A}\phi_i\right)
\end{aligned}
$$
which implies
$$
\norm{\frac{\partial h(W)}{\partial W_c}}\leq \frac{1}{n}\left(\sum_{i=1}^n\norm{\phi_i}\right)= 1.
$$
Therefore, $\norm{\grad h(W)}\leq |C|$.
\end{proof}

The following lemma establishes one communication round progress. The proof techniques are borrowed from \cite{tan2022fedproto, yu2019parallel}.
\begin{lemma}
Let the algorithmic choices made to Algorithm~\ref{alg:FedNH} are $\rho\in(0,1)$, $|\Scal^t|=K$, and  $\alpha_k^t=\frac{1}{K}$ for any $t\in[R]$ and $k\in[K]$. Then
\begin{equation}\label{eq:one-round-progress}
\Embb[F_k(\theta_{k}^{t,J};W^t) - F_k(\theta_{k}^{t-1,J};W^{t-1})]  \leq  -\left(\eta_t-\frac{L_g\eta_t^2}{2}\right)\sum_{j=1}^{J}\Embb\left[\norm{\grad_{\theta} F_k(\theta_{k}^{t,j-1};W^t)}^2\right] + \frac{L_gJ\eta_t^2\sigma^2}{2} + (1-\rho)\kappa\eta_t,
\end{equation}
where $\kappa = 2JM_GM_\iota + 3JM_GM_pM_f|\Ccal|$, and $M_p$ and $M_\iota$ are positive constants.
\end{lemma}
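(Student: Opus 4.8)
The plan is to split the one-round change $F_k(\theta_{k}^{t,J};W^t) - F_k(\theta_{k}^{t-1,J};W^{t-1})$ into a \emph{descent} part, in which the head is held fixed at $W^t$ throughout the $J$ local SGD steps, and a \emph{drift} part, which accounts for the change of the objective caused by the head update from $W^{t-1}$ to $W^t$. Concretely, I would insert the pivot $F_k(\theta_{k}^{t,0};W^t)$ (recall $\theta_{k}^{t,0}$ is the body received at the start of round $t$) and write the left-hand side as $\bigl[F_k(\theta_{k}^{t,J};W^t) - F_k(\theta_{k}^{t,0};W^t)\bigr] + \bigl[F_k(\theta_{k}^{t,0};W^t) - F_k(\theta_{k}^{t-1,J};W^{t-1})\bigr]$. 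The first bracket is a standard inner loop and should produce the first two terms on the right, while the second bracket is the genuinely new piece that must be shown to scale with $1-\rho$.

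For the descent part I would apply the $L_g$-smoothness of $F_k$ in its first argument (Assumption~\ref{ass:smoothness}) to a single update $\theta_{k}^{t,j} = \theta_{k}^{t,j-1} - \eta_t G_k(\theta_{k}^{t,j-1};W^t)$, giving $F_k(\theta_{k}^{t,j};W^t) \le F_k(\theta_{k}^{t,j-1};W^t) - \eta_t\langle\grad_\theta F_k(\theta_{k}^{t,j-1};W^t), G_k(\theta_{k}^{t,j-1};W^t)\rangle + \tfrac{L_g\eta_t^2}{2}\norm{G_k(\theta_{k}^{t,j-1};W^t)}^2$. Taking the conditional expectation and invoking Assumption~\ref{ass:first-second-momentum}, unbiasedness turns the cross term into $-\eta_t\norm{\grad_\theta F_k}^2$ and the variance bound splits the second moment as $\norm{\grad_\theta F_k}^2 + \sigma^2$, leaving the per-step decrease $-(\eta_t - \tfrac{L_g\eta_t^2}{2})\Embb\norm{\grad_\theta F_k(\theta_{k}^{t,j-1};W^t)}^2 + \tfrac{L_g\eta_t^2\sigma^2}{2}$. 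Summing over $j=1,\dots,J$ produces exactly the first two terms of the claimed bound.

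For the drift part I would bound $F_k(\theta_{k}^{t,0};W^t) - F_k(\theta_{k}^{t-1,J};W^{t-1})$ by controlling how much the objective moves under the head update. Since the body is continuous across the round boundary, this reduces to $F_k(\cdot;W^t) - F_k(\cdot;W^{t-1})$, and by Lemma~\ref{lemma:bound-grad} the loss is $|\Ccal|$-Lipschitz in the head, hence at most $|\Ccal|\,\norm{W^t - W^{t-1}}$. The decisive observation is that the prototype update~\eqref{eq:prototype-update} is a $\rho$-convex combination, so $W_c^t - W_c^{t-1} = (1-\rho)\bigl(\sum_{k}\alpha_k^t\mu_{k,c}^t - W_c^{t-1}\bigr)$, which already carries the factor $1-\rho$. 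I would then bound the bracket: the two (near) unit-norm vectors contribute a constant of order $M_p$, while the local prototypes $\mu_{k,c}^t$ are averages of representations $f(\theta;\cdot)$ that move by at most $M_f\norm{\theta_{k}^{t,J}-\theta_{k}^{t,0}}\le M_f J\eta_t M_G$ over a round (using $\Embb\norm{G_k}\le M_G$ from Lemma~\ref{lemma:bounded-full-grad} and the feature bound $M_f$ from Assumption~\ref{ass:bounded-gradient}), which contributes the terms proportional to $J M_G M_f$. Collecting these, with $M_\iota$ absorbing the effect of the unit-norm renormalization step in Algorithm~\ref{alg:FedNH}, yields the drift bound $(1-\rho)\kappa\eta_t$ with $\kappa = 2JM_GM_\iota + 3JM_GM_pM_f|\Ccal|$.

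Adding the two parts and taking total expectation gives the claim. The main obstacle is the drift term: unlike in ordinary FedAvg analysis, the local objective itself changes between rounds because $W^t$ moves, so the usual telescoping descent argument fails unless this change is tamed. The crux is to extract the $1-\rho$ factor from the smoothing in~\eqref{eq:prototype-update} and to argue that the residual prototype discrepancy is bounded by the per-round feature displacement $O(J\eta_t M_G M_f)$ rather than by an uncontrolled constant; handling the renormalization cleanly, so that it preserves rather than destroys the $(1-\rho)$ scaling, is the delicate point. This is exactly why $\rho$ must be taken close to $1$, as anticipated in the discussion following~\eqref{eq:prototype-update}.
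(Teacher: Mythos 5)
Your descent half coincides with the paper's: one step of $L_g$-smoothness plus Assumption~\ref{ass:first-second-momentum}, summed over $j\in[J]$, gives exactly the first two terms of \eqref{eq:one-round-progress}. The first genuine gap is in your claim that ``the body is continuous across the round boundary.'' It is not: under full participation the client starts round $t$ from the server aggregate $\theta_k^{t,0}=\frac{1}{K}\sum_{j=1}^{K}\theta_j^{t-1,J}$, which differs from its own final iterate $\theta_k^{t-1,J}$. The paper therefore inserts a \emph{second} pivot $F_k(\theta_{k}^{t-1,J};W^{t})$ and bounds $|F_k(\theta_{k}^{t,0};W^t) - F_k(\theta_{k}^{t-1,J};W^{t})|\leq M_G\norm{\theta_{k}^{t,0}-\theta_{k}^{t-1,J}}$ using the $M_G$-Lipschitzness from Lemma~\ref{lemma:bounded-full-grad}, together with the FedAvg-style deviation bound $\Embb[\norm{\theta_{k}^{t,0}-\theta_{k}^{t-1,J}}]\leq 2\eta_tJM_G$ of \eqref{eq:bound-diff-local-gloabl-model}. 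This aggregation jump is precisely the source of the first term $2JM_GM_\iota$ of $\kappa$ (the paper sets $M_\iota=M_G/(1-\rho)$ purely to force this $2\eta_tJM_G^2$ contribution into the $(1-\rho)\kappa\eta_t$ format); it has nothing to do with the renormalization step, to which you attribute $M_\iota$. By assuming the body unchanged, you silently drop a term that is not $O(1-\rho)$ at all.

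The second gap is that your head-drift bound does not produce the $\eta_t$ scaling the lemma asserts. From your one-step factorization $W_c^t-W_c^{t-1}=(1-\rho)\left(\sum_k\alpha_k^t\mu_{k,c}^t-W_c^{t-1}\right)$, bounding the bracket by a constant (``two near unit-norm vectors'') gives $\norm{W^t-W^{t-1}}=O(1-\rho)$ and hence a per-round drift of order $(1-\rho)M_p|\Ccal|$ with \emph{no} factor of $\eta_t$; carried into Theorem~\ref{thm:formal}, such a term is divided by $2\eta-L_g\eta^2$ and blows up as $\eta\to 0$, so it cannot yield the stated conclusion. The paper instead unrolls the recursion $W^t-W^{t-1}=\rho(W^{t-1}-W^{t-2})+(1-\rho)(\bar\mu^{t}-\bar\mu^{t-1})$ so that only \emph{differences of consecutive prototypes} $\mu_{k,c}^{t-l}-\mu_{k,c}^{t-l-1}$ appear, each bounded via Assumption~\ref{ass:bounded-gradient} and $\Embb[\norm{\theta_k^{t,J}-\theta_k^{t-1,J}}]\leq 3\eta_tJM_G$ by $O(JM_fM_G\eta_t)$, combined with the $M_p$-Lipschitzness in $W$ from Lemma~\ref{lemma:bound-grad}; this is where both the factor $\eta_t$ and the constant $3JM_GM_pM_f|\Ccal|$ originate. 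Your observation that the prototypes move by $O(J\eta_tM_fM_G)$ is the right ingredient, but it must enter through this recursion rather than alongside a constant-size bracket. (One may object that the paper's own unrolling is loose about the geometric weights $\rho^{t-l}$, but that recursive structure is indispensable for obtaining a bound of the claimed form.)
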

\begin{proof}
First, we establish the standard sufficient decrease result, where the amount of progress can be made (in expectation) by performing $E$ steps of local stochastic gradient descent.
It follows from Assumption~\ref{ass:smoothness} and equation~\eqref{eq:sgd-update},
\begin{align}
F_k(\theta_{k}^{t,j};W^t)  
&\leq  F_k(\theta_{k}^{t,j-1};W^t) + \grad_{\theta} F_k(\theta_{k}^{t,j-1};W^t)(\theta_{k}^{t,j}-\theta_{k}^{t,j-1}) + \frac{L_g}{2}\norm{\theta_{k}^{t,j}-\theta_{k}^{t,j-1}}^2 \nonumber\\
&=  F_k(\theta_{k}^{t,j-1};W^t) - \eta_t\grad_{\theta} F_k(\theta_{k}^{t,j-1};W^t)G_k(\theta_{k}^{t,j-1};W^t) + \frac{L_g\eta_t^2}{2}\norm{G_k(\theta_{k}^{t,j-1};W^t)}^2 \label{eq:suff1}
\end{align}   
Taking expectation with respect to randomness from sub-sampling $\Dcal_k$ on both sides of \eqref{eq:suff1} and together with Assumption~\ref{ass:first-second-momentum}, 
\begin{align}
\Embb[F_k(\theta_{k}^{t,j};W^t)] 
&\leq F_k(\theta_{k}^{t,j-1};W^t) - \eta_t\norm{\grad_{\theta} F_k(\theta_{k}^{t,j-1};W^t)}^2 + \frac{L_g\eta_t^2}{2}\Embb\norm{G_k(\theta_{k}^{t,j-1};W^t)}^2 \nonumber\\
& \leq F_k(\theta_{k}^{t,j-1};W^t) - \eta_t\norm{\grad_{\theta} F_k(\theta_{k}^{t,j-1};W^t)}^2 + \frac{L_g\eta_t^2}{2}\left[\norm{\grad_{\theta} F_k(\theta_{k}^{t,j-1};W^t)}^2 + \sigma^2\right] \nonumber\\
& =  F_k(\theta_{k}^{t,j-1};W^t) - \left(\eta_t-\frac{L_g\eta_t^2}{2}\right)\norm{\grad_{\theta} F_k(\theta_{k}^{t,j-1};W^t)}^2 + \frac{L_g\eta_t^2\sigma^2}{2}\label{eq:suff2},
\end{align}    
where the second last using the fact that $\Embb\norm{X}^2= \Embb\norm{X-\Embb X}^2 + \norm{\Embb[X]}^2$. Finally, for \eqref{eq:suff2}, summing over the index $j\in[J]$, one reaches to
\begin{equation}\label{eq:suff-done}
    \begin{aligned}
    \Embb[F_k(\theta_{k}^{t,J};W^t)]  \leq F_k(\theta_{k}^{t,0};W^t) - \left(\eta_t-\frac{L_g\eta_t^2}{2}\right)\sum_{j=1}^{J}\norm{\grad_{\theta} F_k(\theta_{k}^{t,j-1};W^t)}^2 + \frac{L_gJ\eta_t^2\sigma^2}{2}.
    \end{aligned}
\end{equation}
Next, we bound $F_k(\theta_{k}^{t,0};W^t) - F_k(\theta_{k}^{t-1,E};W^{t-1})$, which characterizes the difference between the stage that the client just finishes $J$ local updates and the stage that the client recives new prototype and the representation neural network parameter from the server in function value.
\begin{align}
& F_k(\theta_{k}^{t,0};W^t) - F_k(\theta_{k}^{t-1,J};W^{t-1})\nonumber\\
& = F_k(\theta_{k}^{t,0};W^t) - F_k(\theta_{k}^{t-1,J};W^{t}) + F_k(\theta_{k}^{t-1,J};W^{t}) - F_k(\theta_{k}^{t-1,J};W^{t-1})\nonumber\\
& \leq |F_k(\theta_{k}^{t,0};W^t) - F_k(\theta_{k}^{t-1,J};W^{t})| + |F_k(\theta_{k}^{t-1,J};W^{t}) - F_k(\theta_{k}^{t-1,J};W^{t-1})|\label{eq:diff}
\end{align}

\textbf{We first bound $|F_k(\theta_{k}^{t,0};W^t) - F_k(\theta_{k}^{t-1,J};W^{t})|$.} It follows from Lemma~\ref{lemma:bounded-full-grad} that  $F_k(\theta;W)$ is $M_G$-Lipschtiz continuous with respect to the first argument. Therefore,
\begin{equation}\label{eq:part1-1}
|F_k(\theta_{k}^{t,0};W^t) - F_k(\theta_{k}^{t-1,J};W^{t})|\leq M_G\norm{\theta_{k}^{t,0}-\theta_{k}^{t-1,J}}.
\end{equation}
Since $\theta_{k}^{t,0}=\frac{1}{K}\sum_{k=1}^K\theta_{k}^{t-1,J}$, using the same technique from the Lemma 1 in \cite{yu2019parallel}, one has 
\begin{equation}\label{eq:bound-diff-local-gloabl-model}
\Embb\left[\norm{\theta_{k}^{t,0} - \theta_{k}^{t-1,J}}\right]\leq 2\eta_tJM_G.
\end{equation}
Combining \eqref{eq:part1-1} and \eqref{eq:bound-diff-local-gloabl-model} leads to
\begin{equation}\label{eq:part1-done}
    \Embb[|F_k(\theta_{k}^{t,0};W^t) - F_k(\theta_{k}^{t-1,J};W^{t})|]\leq 2\eta_tJM_G^2.
\end{equation}
\textbf{We then bound $|F_k(\theta_{k}^{t-1,J};W^{t}) - F_k(\theta_{k}^{t-1,J};W^{t-1})|$.} It follows from Lemma~\ref{lemma:bound-grad}, the $\grad_W F_k(\theta;W)$ is bounded, which further implies $F_k$ is $M_p$-Lipschtiz continuous with respect to the second argument $W$ (We are free to choose $M_p$ such that $M_p>M_G$.). It follows from Lipschtiz continuity of $F_k$ with respect to $W$ and the definition of $W^t$
\begin{align}
& |F_k(\theta_{k}^{t-1,J};W^{t}) - F_k(\theta_{k}^{t-1,J};W^{t-1})|\nonumber\\
& \leq M_p\norm{W^{t} - W^{t-1}}\nonumber\\
& \leq \frac{ M_p}{K}\sum_{k=1}^{K}\sum_{c=1}^{|\Ccal|}\left((1-\rho)\norm{\mu^{t}_{k,c} - \mu^{t-1}_{k,c}} + \rho\norm{W_c^{t-1}-W_c^{t-2}}\right)\nonumber\\
& \overset{(a)}{=} \frac{ M_p}{K}\sum_{k=1}^{K}\sum_{c=1}^{|\Ccal|}\sum_{l=0}^{t-1}\rho^{t-l}(1-\rho)\norm{\mu^{t-l}_{k,c} - \mu^{t-l-1}_{k,c}}\nonumber\\
& \leq M_p|\Ccal|(1-\rho)\sum_{l=0}^{t-1}\norm{\mu^{t-l}_{k,c} - \mu^{t-l-1}_{k,c}}\label{eq:part2-1}
\end{align}
where $(a)$ holds by the recursion over the definition of $W^t$ and we define $\mu_{k,c}^{0}=0$ for all $k\in[K]$ and $c\in[|\Ccal|]$.

By the definition of $\mu_{k,c}^t$ and Assumption~\ref{ass:bounded-gradient}, one has
\begin{align}
\norm{\mu^{t}_{k,c} - \mu^{t-1}_{k,c}}
& = \norm{ \frac{1}{|\{i: y_i=c, (x_i,y_i)\in\Dcal_k\}|}\sum_{\{i: y_i=c, (x_i,y_i)\in\Dcal_k\}}\left(r_{k,i}^{t}-r_{k,i}^{t-1}\right)}\nonumber\\
&\leq \frac{1}{|\{i: y_i=c, (x_i,y_i)\in\Dcal_k\}|}\sum_{\{i: y_i=c, (x_i,y_i)\in\Dcal_k\}}\norm{
f(\theta_{k}^{t,J};x_i,y_i)-f(\theta_{k}^{t-1,J};x_i,y_i)}\nonumber\\
&\leq \frac{1}{|\{i: y_i=c, (x_i,y_i)\in\Dcal_k\}|}\sum_{\{i: y_i=c, (x_i,y_i)\in\Dcal_k\}}M_f\norm{\theta_{k}^{t,J}-\theta_{k}^{t-1,J}}\label{eq:part2-2}
\end{align}

It can be seen that
$$
    \norm{\theta_{k}^{t,J}-\theta_{k}^{t-1,J}}\leq \sum_{j=1}^{J}\norm{\theta_{k}^{t,j}-\theta_{k}^{t,j-1}} + \norm{\theta_{k}^{t,0}-\theta_{k}^{t-1,J}} = \eta_t\sum_{j=1}^{J}\norm{G_k(\theta^{j-1};W^t)} + \norm{\theta_{k}^{t,0}-\theta_{k}^{t-1,J}}.
$$
Taking expectation on both sides and together with \eqref{eq:bound-diff-local-gloabl-model}, one reaches to
\begin{equation}\label{eq:part2-3}
    \Embb\left[\norm{\theta_{k}^{t,J}-\theta_{k}^{t-1,J}}\right]\leq 3\eta_tJM_G.
\end{equation}

Combine \eqref{eq:part2-2} and \eqref{eq:part2-3} to get
\begin{equation}\label{eq:part2-4}
   \Embb\left[\norm{\mu^{t}_{k,c} - \mu^{t-1}_{k,c}}\right]\leq 3n_kJM_fM_G\eta_t
\end{equation}

From \eqref{eq:part2-1} and \eqref{eq:part2-4}
\begin{equation}\label{eq:part2-done}
   \Embb[|F_k(\theta_{k}^{t-1,J};W^{t}) - F_k(\theta_{k}^{t-1,J};W^{t-1})|] \leq  3(1-\rho)\eta_tJM_GM_pM_f|\Ccal|.
\end{equation}

Combine \eqref{eq:diff}, \eqref{eq:part1-done}, and \eqref{eq:part2-done}
one gets
\begin{equation}\label{eq:diff-done}
\Embb[F_k(\theta_{k}^{t,0};W^t) - F_k(\theta_{k}^{t-1,J};W^{t-1})]\leq 3(1-\rho)\eta_tJM_GM_pM_f|\Ccal| + 2(1-\rho)\eta_tJM_GM_{\iota}  = (1-\rho)\kappa \eta_t,
\end{equation}
where $M_{\iota}=M_G/(1-\rho)$.
Finally, combine \eqref{eq:suff-done} with \eqref{eq:diff-done}, we reach to the desired result
$$
\Embb[F_k(\theta_{k}^{t,J};W^t) - F_k(\theta_{k}^{t-1,J};W^{t-1})]  \leq  -\left(\eta_t-\frac{L_g\eta_t^2}{2}\right)\sum_{j=1}^{J}\Embb\left[\norm{\grad_{\theta} F_k(\theta_{k}^{t,j-1};W^t)}^2\right] + \frac{L_gJ\eta_t^2\sigma^2}{2} + (1-\rho)\kappa\eta_t.
$$
\end{proof}
\begin{lemma}\label{lemma:bounded-below}
When $\rho \in (1-\frac{M_G^2J}{\kappa}, 1)$ and set $\eta_t \leq \frac{2(M_G^2J-(1-\rho)\kappa)}{L_g(M_G^2 + J\sigma^2)}$ for all $t\in[R]$, then there exits a constant $F_k^*$ such that $F_k^*\leq \inf_t\{\Embb[F_k(\theta_k^{t,J})]\}$.
\end{lemma}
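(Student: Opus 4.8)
The plan is to exhibit a single constant $F_k^*$ that lower-bounds the scalar sequence $a_t:=\Embb[F_k(\theta_k^{t,J};W^t)]$ uniformly in the round index $t$, so that it can later absorb the $-\Embb[F_k(\theta_k^{R,J})]$ term when \eqref{eq:one-round-progress} is telescoped in Theorem~\ref{thm:convergence}. I would first rewrite the one-round progress bound \eqref{eq:one-round-progress} compactly as $a_t-a_{t-1}\le -c_t\sum_{j=1}^J g_{t,j}+\text{noise}_t$, where $c_t:=\eta_t-\tfrac{L_g\eta_t^2}{2}$, $g_{t,j}:=\Embb[\norm{\grad_\theta F_k(\theta_k^{t,j-1};W^t)}^2]$, and $\text{noise}_t:=\tfrac{L_gJ\eta_t^2\sigma^2}{2}+(1-\rho)\kappa\eta_t$. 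The purpose of the two parameter windows is to control this recursion: the restriction $\rho>1-M_G^2J/\kappa$ forces the descent budget $M_G^2J-(1-\rho)\kappa$ to be strictly positive, and, after using $g_{t,j}\le M_G^2$ from Lemma~\ref{lemma:bounded-full-grad}, the step-size cap keeps the positive drift term $\text{noise}_t$ from overwhelming the descent term. This balance is the property the convergence theorem actually consumes, rather than what secures boundedness below per se.

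The second step, and the one that really supplies the lower bound, is to observe that every round-objective $F_k(\cdot\,;W^t)$ is bounded below by the same constant. Here the normalization baked into the method is decisive: each prototype row obeys $\norm{W_c^t}=1$ and the body output is normalized so that $\norm{f(\theta;x)}=1$, hence every summand of $F_k$ is the negative log of a softmax probability and is therefore nonnegative. This gives $F_k(\theta;W)\ge 0$ for all admissible $\theta$ and all $W$ with unit-norm rows, so $a_t\ge 0$ for every $t$ and I may simply take $F_k^*:=0$ (equivalently $F_k^*:=\inf_{\theta,W}F_k$, which is finite by Assumption~\ref{ass:smoothness}), which already delivers $F_k^*\le\inf_t a_t$. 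The controlled recursion of the first step is then what Theorem~\ref{thm:convergence} combines with this lower bound to extract a gradient-norm rate.

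The main obstacle is conceptual rather than computational: the objective is a moving target, since $W^t$ is refreshed every round through \eqref{eq:prototype-update}, so the lower bound guaranteed by Assumption~\ref{ass:smoothness} for one frozen $W$ does not transfer verbatim to $\inf_t a_t$. The delicate point is thus to certify that the bound is uniform along the entire prototype trajectory $\{W^t\}$ rather than for a single objective. I expect to discharge this exactly through the unit-norm normalization of the prototypes together with the bounded-iterate set $\Bcal$ of Assumption~\ref{ass:bounded-gradient}: these confine both arguments of $F_k$ to a fixed compact domain on which $F_k$, being smooth and hence continuous, attains a finite minimum, while the $(\rho,\eta_t)$ window guarantees the per-round change stays controlled so that the resulting $F_k^*$ plugs cleanly into the telescoped estimate of Theorem~\ref{thm:convergence}.
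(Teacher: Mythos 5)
Your proof is correct, but it takes a genuinely different route from the paper's. The paper proves this lemma by substituting the step-size cap into the one-round progress bound \eqref{eq:one-round-progress} to conclude that $\Embb[F_k(\theta_{k}^{t,J};W^t)]$ is monotonically decreasing in $t$, and then invokes the bounded-below clause of Assumption~\ref{ass:smoothness} to conclude that the infimum is finite; the windows on $\rho$ and $\eta_t$ are therefore consumed inside the proof itself. You instead exhibit the constant explicitly: each summand of $F_k$ is the negative logarithm of a softmax probability, so $F_k(\theta;W)\geq 0$ for every $\theta$ and every $W$, hence $F_k^*:=0$ works no matter how the prototypes $W^t$ evolve, and you correctly observe that the hypotheses on $(\rho,\eta_t)$ are not needed for the conclusion as stated --- they matter only for the descent and telescoping carried out in Theorem~\ref{thm:formal}. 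Your route is more elementary and, in one respect, more careful: the paper's appeal to Assumption~\ref{ass:smoothness} tacitly requires the lower bound to be uniform over the moving sequence $\{W^t\}$ (a per-$W$ lower bound combined with monotone decrease would not by itself rule out divergence to $-\infty$), and your explicit bound is precisely what discharges that uniformity, which you also flag as the delicate point. Two minor remarks: nonnegativity of the softmax cross-entropy holds for \emph{any} $W$, normalized or not, so the unit-norm normalization is not actually needed for your bound; and your fallback compactness argument (confining $(\theta,W)$ to the bounded set $\Bcal$ and minimizing a continuous function there) is sound but superfluous once $F_k\geq 0$ is in hand.
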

\begin{proof}
First, by the definition $\kappa$, one can verify that $\frac{M_G^2J}{\kappa}\in(0,1)$ and $\eta_t>0$. Then substitute $\eta_t$ in \eqref{eq:one-round-progress}, one can see $\Embb[F_k(\theta_{k}^{t,J};W^t) - F_k(\theta_{k}^{t-1,J};W^{t-1})] < 0$. Together the Assumption~\ref{ass:smoothness} that $F_k$ is bounded below, one can see $\inf_t\{\Embb[F_k(\theta_k^{t,J})]\}$ must be bounded below. Hence, the existence of $F_k^*$ is guaranteed.
\end{proof}


\begin{theorem}\label{thm:formal}
Let the $k$th client uniformly at random returns an element from $\{\theta_{k}^{t,j}\}$ as the solution denoted as $\theta_k^*$. Further, let $W^*$ share the same round index as $\theta_k^*$. For any $\epsilon>0$, set $\rho\in \left(\max\left(1-\frac{M_G^2J}{\kappa}, 1-\frac{\epsilon J}{\kappa}\right), 1\right)$ and $\eta_t\equiv \eta \in \left(0, \min\left( \frac{2\epsilon-2(1-\rho\kappa/J)}{\epsilon L_g+\sigma^2L_g, }, \frac{2(M_G^2J-(1-\rho)\kappa)}{L_g(M_G^2 + J\sigma^2)}\right)\right)$, where $\kappa>0$ is a constant.
If $R>\frac{2(F_k(\theta_k^{0,0};W^0) - F_k^*)}{J\eta[\epsilon(2-L_g\eta) - (L_g^2\sigma^2\eta + 2(1-\rho)\kappa/J]}$, one has
$$
\Embb\left[\norm{\grad_{\theta} F_k(\theta_{k}^{*};W^*)}^2\right] \leq \epsilon.
$$
\end{theorem}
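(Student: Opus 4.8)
The plan is to telescope the one-round progress inequality \eqref{eq:one-round-progress} across the communication rounds and then convert the resulting average of squared gradient norms into the expectation over the uniformly sampled iterate $(\theta_k^*,W^*)$. First I would specialize \eqref{eq:one-round-progress} to the constant step size $\eta_t\equiv\eta$ and sum both sides over $t=1,\dots,R$. The left-hand side telescopes to $\Embb[F_k(\theta_k^{R,J};W^R)]-F_k(\theta_k^{0,0};W^0)$, because each round's terminal value $F_k(\theta_k^{t,J};W^t)$ is exactly the subtrahend appearing in the next round. This is the step where the moving target is dealt with: since the objective $F_k(\cdot;W^t)$ changes with $t$, the telescoping runs over the sequence $F_k(\theta_k^{t,J};W^t)$ rather than over a single fixed function, and the cost of the prototype drift has already been absorbed into the additive term $(1-\rho)\kappa\eta$ of \eqref{eq:one-round-progress}. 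After summing, the accumulated curvature term becomes $\left(\eta-\frac{L_g\eta^2}{2}\right)\sum_{t=1}^{R}\sum_{j=1}^{J}\Embb[\norm{\grad_\theta F_k(\theta_k^{t,j-1};W^t)}^2]$, while the two leftover terms scale linearly in $R$, contributing $\frac{RL_gJ\eta^2\sigma^2}{2}+R(1-\rho)\kappa\eta$.

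Next I would invoke Lemma~\ref{lemma:bounded-below} to replace $\Embb[F_k(\theta_k^{R,J};W^R)]$ by its lower bound $F_k^*$, which is legitimate precisely under the stated constraints on $\rho$ and $\eta$. Rearranging isolates the double sum, and dividing by $RJ\left(\eta-\frac{L_g\eta^2}{2}\right)$ yields the arithmetic mean of the $RJ$ squared gradient norms; by the definition of $(\theta_k^*,W^*)$ as a uniformly random iterate, this mean is exactly $\Embb[\norm{\grad_\theta F_k(\theta_k^*;W^*)}^2]$. Collecting terms produces a bound of the form
\begin{equation*}
\Embb[\norm{\grad_\theta F_k(\theta_k^*;W^*)}^2]\leq \frac{F_k(\theta_k^{0,0};W^0)-F_k^*}{RJ\eta\left(1-\frac{L_g\eta}{2}\right)} + \frac{\frac{L_g\eta\sigma^2}{2}+(1-\rho)\kappa/J}{1-\frac{L_g\eta}{2}},
\end{equation*}
in which the second term is an irreducible ``noise floor'' that is independent of $R$.

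Finally I would verify that the three parameter thresholds push this bound below $\epsilon$. The lower endpoint $\rho>1-\frac{\epsilon J}{\kappa}$ guarantees $(1-\rho)\kappa/J<\epsilon$, which keeps the numerator of the step-size bound positive and hence the admissible interval for $\eta$ nonempty; this is exactly why $\rho$ must be taken close to $1$. A short cross-multiplication shows that the prescribed upper bound $\eta\leq\frac{2(\epsilon-(1-\rho)\kappa/J)}{L_g(\epsilon+\sigma^2)}$ is equivalent to forcing the noise-floor term to be at most $\epsilon$ with a strictly positive slack remaining, and the condition on $R$ is then precisely what makes the first, $O(1/R)$, term no larger than that slack, so that the two contributions sum to at most $\epsilon$. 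I expect the main obstacle to be bookkeeping rather than conceptual: one must keep the moving objective $F_k(\cdot;W^t)$ straight through the telescoping and then carry out the algebra that converts the $\eta$- and $R$-thresholds into the stated $\epsilon$-bound, taking care that the denominators $1-\frac{L_g\eta}{2}$ and $\epsilon-(1-\rho)\kappa/J$ stay positive throughout.
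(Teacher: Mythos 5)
Your proposal is correct and follows essentially the same route as the paper's proof: rearrange and sum the one-round progress inequality \eqref{eq:one-round-progress} with constant step size so that the function values telescope, invoke Lemma~\ref{lemma:bounded-below} to bound the telescoped difference by $F_k(\theta_k^{0,0};W^0)-F_k^*$, identify the resulting average of the $RJ$ squared gradient norms with $\Embb[\norm{\grad_\theta F_k(\theta_k^*;W^*)}^2]$ via the uniform sampling, and then check that the conditions on $\rho$, $\eta$, and $R$ drive the bound below $\epsilon$ (your displayed bound is algebraically identical to the paper's, after dividing numerator and denominator by $2\eta$). Your final paragraph is in fact more explicit than the paper, which ends with ``Substitute $R$, then one reaches the result,'' and your decomposition into a noise floor plus an $O(1/R)$ term correctly reconstructs why the stated thresholds on $\rho$ and $\eta$ make the admissible range nonempty.
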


\begin{proof}
Set $\eta_t\equiv \eta$ in \eqref{eq:one-round-progress} and rearrange it, one gets
\begin{equation}\label{eq:step1}
\left(\eta-\frac{L_g\eta^2}{2}\right)\sum_{j=1}^{J}\Embb\left[\norm{\grad_{\theta} F_k(\theta_{k}^{t,j-1};W^t)}^2\right]
  \leq \Embb[F_k(\theta_{k}^{t-1,J};W^{t-1}) - F_k(\theta_{k}^{t,J};W^t) ]  + \frac{L_gJ\eta^2\sigma^2}{2} + (1-\rho)\kappa\eta,
\end{equation}
Summing over $t$ and dividing $RJ(\eta-\frac{L_g\eta^2}{2})$ on both sides of \eqref{eq:step1}, one gets
$$
\begin{aligned}
\frac{1}{RJ}\sum_{t=1}^{R}\sum_{j=1}^{J}\Embb\left[\norm{\grad_{\theta} F_k(\theta_{k}^{t,j-1};W^t)}^2\right]
& \leq \frac{\frac{1}{RJ}\sum_{t=1}^{R}\Embb[F_k(\theta_{k}^{t-1,J};W^{t-1}) - F_k(\theta_{k}^{t,J};W^t) ]+ \frac{L_g\eta^2\sigma^2}{2} + \frac{(1-\rho)\kappa\eta}{J} }{\eta-\frac{L_g\eta^2}{2}}\\
& \leq \frac{\frac{2 (F_k(\theta_k^{0,0};W^0) - F_k^*)}{RJ} + L_g\eta^2\sigma^2 + \frac{2(1-\rho)\kappa\eta}{J}}{2\eta-L_g\eta^2},
\end{aligned}
$$
where the last inequality follows from the definition of $F_k^*$. Then taking expectation with respect to the $R$ and $J$
$$
\Embb\left[\norm{\grad_{\theta} F_k(\theta_{k}^{*};W^*)}^2\right] \leq \frac{\frac{2 (F_k(\theta_k^{0,0};W^0) - F_k^*)}{RJ} + L_g\eta^2\sigma^2 + \frac{2(1-\rho)\kappa\eta}{J}}{2\eta-L_g\eta^2}.
$$
Substitute $R$, then one reaches the result.
\end{proof}

\end{document}